\newcommand{\R}{\mathbb{R}}
\newcommand{\N}{\mathbb{N}}
\newcommand{\E}{\mathbb{E}}
\newcommand{\PH}{\text{PH}}
\newcommand{\PPM}{\text{PPM}}
\newcommand{\MMD}{\text{MMD}}
\newcommand{\lin}{\text{lin}}
\newcommand{\RBF}{\text{RBF}}
\newcommand{\btheta}{{\boldsymbol{\theta}}}
\newcommand{\bomega}{{\boldsymbol{\omega}}}
\newcommand{\cX}{\mathcal{X}}
\newcommand{\cP}{\mathcal{P}}
\newcommand{\cH}{\mathcal{H}}
\newcommand{\cT}{\mathcal{T}}
\newcommand{\cM}{\mathcal{M}}
\newcommand{\cL}{\mathcal{L}}
\newcommand{\cV}{\mathcal{V}}
\newcommand{\cF}{\mathcal{F}}
\newcommand{\fT}{\mathfrak{T}}
\newcommand{\fH}{\mathfrak{H}}
\newtheorem{lemma}{Lemma}
\newtheorem{corollary}{Corollary}
\newtheorem{theorem}{Theorem}
\newtheorem{remark}{Remark}
\title{Towards Scalable Topological Regularizers}
\author{Hiu-Tung Wong$^{1,\dagger}$, Darrick Lee$^{2,\dagger}$, Hong Yan$^{1,3}$ \\
$^1$Centre for Intelligent Multidimensional Data Analysis, Science and Technology Park, Hong Kong\\
$^2$School of Mathematics, University of Edinburgh, UK\\
$^3$Department of Electrical Engineering, City University of Hong Kong, Kowloon, Hong Kong\\
\texttt{hiutung@innocimda.com, darrick.lee@ed.ac.uk, h.yan@cityu.edu.hk} \\
}
\begin{document}
\def\thefootnote{$\dagger$}\footnotetext{Equal contribution.}\def\thefootnote{\arabic{footnote}}

\maketitle

\begin{abstract}
    Latent space matching, which consists of matching distributions of features in latent space, is a crucial component for tasks such as adversarial attacks and defenses, domain adaptation, and generative modelling.
    Metrics for probability measures, such as Wasserstein and maximum mean discrepancy, are commonly used to quantify the differences between such distributions.
    However, these are often costly to compute, or do not appropriately take the geometric and topological features of the distributions into consideration.
    Persistent homology is a tool from topological data analysis which quantifies the multi-scale topological structure of point clouds, and has recently been used as a topological regularizer in learning tasks.
    However, computation costs preclude larger scale computations, and discontinuities in the gradient lead to unstable training behavior such as in adversarial tasks. 
    We propose the use of \emph{principal persistence measures}, based on computing the persistent homology of a large number of small subsamples, as a topological regularizer.
    We provide a parallelized GPU implementation of this regularizer, and prove that gradients are continuous for smooth densities.
    Furthermore, we demonstrate the efficacy of this regularizer on shape matching, image generation, and semi-supervised learning tasks, opening the door towards a scalable 
    regularizer for topological features. 
\end{abstract}

\section{Introduction}\label{sec:intro}

Latent space matching is a fundamental task in deep learning. Quantifying differences in latent representations and optimizing the network accordingly enables applications such as adversarial attack and defenses~\citep{yu2021lafeat,madaan2020adversarial,lin2020dual}, domain adaptation~\citep{sun2016return,sun2016deep,long2017deep,xu2019larger} and few-shot learning~\citep{schonfeld2019generalized,xu2022multitask,mondal2023few}. Unsupervised training frameworks such as Generative Adversarial Networks (GANs)~\citep{goodfellow2014generative} are fundamentally built on this concept, which is the primary framework considered throughout this article.

\paragraph*{Topological Features of Latent Representations.}
The \emph{manifold hypothesis} states that real-world high dimensional data sets are often concentrated about lower dimensional submanifolds. Recent work has empirically verified that image datasets such as CIFAR-10 and ImageNet satisfy a \emph{union of manifolds hypothesis}~\citep{brown2022verifying}, where the intrinsic dimension of connected components may be different. 
In the context of GANs, correctly learning the geometric and topological properties of these lower-dimensional structures enable meaningful interpolation in data space by traversing network latent spaces. Such properties are crucial to generalization ability~\citep{zhou2020learning,wang2021learning} and generation quality~\citep{zhu2023exploring,katsumata2024revisiting}. 
Furthermore, topological metrics based on \emph{persistent homology} provide highly effective evaluation metrics for GANs~\citep{zhou2021evaluating,khrulkov_geometry_2018,barannikov_manifold_2021,charlier_phom-gem_2019}.

Standard approaches to latent space matching use metrics on probability measures such as the Wasserstein distance and maximum mean discrepancy (MMD) metrics, which \emph{implicitly} take topological features into consideration.
In particular, the measures (and thus all topological properties) become equivalent when the distance is trivial. 
However, GANs are not guaranteed to reach a global minimum, and often converge to local saddle points~\citep{berard2019closer,liang2019interaction}.
When measures are a finite distance apart, their topological properties may be distinct. 
Motivated by the above work, which demonstrates that topological similarity between real and generated distributions is a critical component of GAN performance, we propose the use of a topological regularizer which \emph{explicitly} measures the difference between topological features at non-equilibrium states.

\paragraph*{Persistent Homology.} Persistent homology (PH) is a tool which summarizes the multi-scale topological features of a dataset in an object called a \emph{persistence diagram}.
Such topological summaries have been applied in machine learning tasks~\citep{hensel_survey_2021}, such as image segmentation~\citep{hu_topology-preserving_2019,clough_topological_2022,shit_cldice_2021,waibel_capturing_2022}, and graph learning~\citep{horn_topological_2021,ballester_expressivity_2024}. 
The standard way to quantify the topological differences between datasets is to compute the Wasserstein distance between their persistence diagrams. 
However, there are two difficulties in directly applying persistence-based methods in adversarial deep learning tasks.

\begin{enumerate}
    \item \textbf{Scalability.} Persistent homology of a large point cloud is prohibitively expensive to compute\footnote{The worst-case time complexity is $O(m^3)$, where $m$ is the number of simplices. Computing dimension $k$ persistent homology of a point cloud with $n$ points can have up to $m = O(n^{k+1})$ simplices. However, for practical data sets, the complexity is often much lower; see~\citep{otter_roadmap_2017} for further discussion.}. Even worse, the persistent homology algorithm is highly nontrivial to parallelize.
    Modern PH packages are either pure CPU implementations~\citep{bauer_ripser_2021,perez_giotto-ph_2021} or use a CPU-GPU hybrid algorithm~\citep{zhang_gpu-accelerated_2020}.
    \item \textbf{Smoothness.} Persistent homology is differentiable almost everywhere, which allows us to compute backpropagate through PH layers; in fact, stochastic subgradient descent is provably convergent with respect to persistence-based functions~\citep{carriere_optimizing_2021}. However, in adversarial tasks, where the loss function is constantly changing, discontinuities in the gradient leads to highly unstable training dynamics~\citep{wiatrak2019stabilizing}.
    
\end{enumerate}

\paragraph*{Contributions.}
We address these two issues by modifying the two central parts of the classical persistence pipeline: the topological summary itself, as well as the metric used to compare them. 

\begin{itemize}
\item \textbf{Topological Summary: Principal Persistence Measures.} To reduce the computational cost, we compute the persistent homology of many small batches of subsamples in parallel. 
By choosing a specific number of points depending on the homology dimension, the persistence computation significantly simplifies, and we obtain an object called the \emph{principal persistence measure (PPM)}~\citep{gomez_curvature_2024}. We provide a pure GPU implementation of the PPM, which enables a scalable methodology to incorporate topological features in larger-scale ML tasks. Moreover, subsampling results in a smoother features~\citep{solomon_fast_2021}, resulting in more stable training behavior.

\item\textbf{Topological Metric: Maximum Mean Discrepancy for PPMs.} The Wasserstein distance is the primary metric used to compare PPMs~\citep{gomez_curvature_2024}. In practice, one often uses entropic regularization to lower the computational cost~\citep{cuturi2013sinkhorn,lacombe2018large}. Despite this, it is still computationally expensive, and we use maximum mean discrepancy (MMD) metrics to compare PPMs. This coincides with the \emph{persistence weighted kernels} introduced in~\citep{kusano_persistence_2016} for persistence diagrams. 
Our main theoretical results deals with establishing this metric in PPM framework.
\begin{itemize}
    \item \Cref{thm:main_characteristic} builds characteristic kernels for PPMs from kernels on $\R^2$. 
    \item \Cref{thm:metrizing_weak_conv} shows that these MMD metrics induce the same topology as Wasserstein.
    \item \Cref{thm:main_smooth_gradient} shows that gradients with respect to this metric are continuous. 
\end{itemize}
\Cref{thm:main_characteristic} and~\Cref{thm:metrizing_weak_conv} adapt results from~\citep{kusano_persistence_2016, divol_understanding_2021} to the setting of PPMs, while to the authors' knowledge,~\Cref{thm:main_smooth_gradient} is novel. 
\end{itemize}
These theoretical results imply that we can use PPM-Reg as an alternative to computationally expensive Wasserstein (or Sinkhorn) metrics, which produces a stable gradient for training deeper networks. 
In particular, the proposed methods allow us to incorporate topological features into large-scale machine learning tasks in a stable manner~\cite[Section 4.2]{papamarkou2024position}.
We demonstrate this empirically in~\Cref{sec:experiments}, where we provide extensive experiments to demonstrate the efficacy of PPM-Reg in the GAN framework.

\paragraph*{Related Work.}

The application of persistent homology in machine learning has been enabled by theoretical studies into the differentiability properties of PH~\citep{carriere_optimizing_2021, leygonie_framework_2022}, which have also been extended to the multiparameter setting~\citep{scoccola_differentiability_2024}. However, large-scale computation of PH remains a challenge, though recent work has considered computational strategies for optimization problems~\citep{nigmetov_topological_2024, luo_accelerating_2024}. Our MMD metric for PPMs is also related to work on kernels for persistence diagrams~\cite{kusano_persistence_2016} and linear representations of persistence diagrams~\cite{divol_understanding_2021, divol_choice_2019}.

Subsampling methods for PH of metric measure spaces was introduced in~\citep{blumberg_robust_2014}, and used to approximate PH for point clouds~\citep{chazal_subsampling_2015, cao_approximating_2022, stolz_outlier-robust_2023}. Furthermore, distributed approaches for computing the true PH of point clouds have been proposed in~\citep{yoon_persistence_2020,torras-casas_distributing_2023} via spectral sequence methods. More recently,~\citep{solomon_fast_2021} used subsampling methods for topological function optimization, motivated by the same issues of computational cost and instability of gradients~\citep{bendich_stabilizing_2020}, and~\citep{solomon_geometry_2022} showed that such distributed persistence methods interpolate between geometric and topological features based on the number of subsamples. The starting point of this article is~\citep{gomez_curvature_2024}, which introduces principal persistence measures.

\section{Latent Space Matching in Generative Adversarial Networks} \label{sec:gan}

Our primary consideration is the latent space matching in generative adversarial networks (GANs). A GAN is an unsupervised training framework consisting of a generator $g_\bomega: \R^N \to \R^M$, a discriminator $d_\btheta: \R^M \to \R^L$ and a value function $\mathcal{V}: \cP(\R^L) \times \cP(\R^L) \to \R$~\citep{goodfellow2014generative}. Consider a set of training data, such as a collection of images, which we view as a probability measure $\mu$ on $\R^M$, the \emph{data space}. The \emph{generator} $g_\bomega$ is parameterized by $\bomega \in \R^G$, and its goal is to map a given noise measure $\nu$ on $\R^N$ (the \emph{noise space}) to $\R^M$ such that $g_\bomega(\nu)$ can be interpreted as novel examples of $\mu$. The \emph{discriminator} $d_\btheta$, parametrized by $\btheta \in \R^D$, performs dimensionality reduction, sending the data space to the \emph{latent space} $\R^L$. Finally, the \emph{value function} is used to quantify the difference between the real data $\mu$ and the generated data $g_\bomega(\nu)$ by $\mathcal{V}(d_\btheta(\mu), d_\btheta(g_\bomega(\nu)))$.

The generator is optimized such that it minimizes the value function, while the goal of the discriminator is to maximize it. Training algorithms~\citep{goodfellow2014generative,arjovsky2017wasserstein, gulrajani2017improved} have been proposed to find an equilibrium of the minimax
problem, given by  
\begin{equation}
  \max_{\boldsymbol{\theta}}\min_{\boldsymbol{\omega}} \mathcal{V}\left( d_{\boldsymbol{\theta}}(\mu),d_{\boldsymbol{\theta}}(g_{\boldsymbol{\omega}}(\nu))\right).
\end{equation}
In practice, parameters in $d_{\boldsymbol{\theta}}$ and $g_{\boldsymbol{\omega}}$ are updated alternatively. 
Common value functions used are metrics between probability distributions such as the Wasserstein distance, which has better theoretical properties in solving the minimax problem with gradient descent~\citep{arjovsky2017wasserstein}, and the Cramer distance~\citep{bellemare2017cramer}, which has unbiased gradients with mini-batch training. However, these metrics do not explicitly take topological features of the distributions into consideration. This motivates the use of a \emph{topological regularizer}, which explicitly accounts for topological features in a non-equilibrium state. 
In particular, we consider value functions of the form 
\begin{align} \label{eq:reg_loss}
    \cV = \cL + \lambda \cT,
\end{align}
where $\cL$ is the main loss function, $\lambda> 0$ is a hyperparameter, $\cT$ is our proposed topological regularizer which will be introduced in the following sections.

\section{Principal Persistence Measures}

We provide a streamlined exposition of the notion of principal persistence measures~\citep{gomez_curvature_2024}. As there already exists several excellent references for persistent homology, we refer the reader to~\citep{edelsbrunner_computational_2010, dey_computational_2022,hensel_survey_2021} for further background. Furthermore, we highlight the fact that we only consider persistent homology for simple point clouds, with an explicit definition in~\Cref{eq:simple_ph}. For a topological space $\cX$, we use $\cP(\cX)$ (resp. $\cP_c(\cX)$) to denote the Borel probability measure (resp.~with compact support) on $\cX$. Throughout this article, we consider persistent homology of point clouds with the Vietoris-Rips filtration.

\begin{figure}[ht]
    \includegraphics[width=\linewidth]{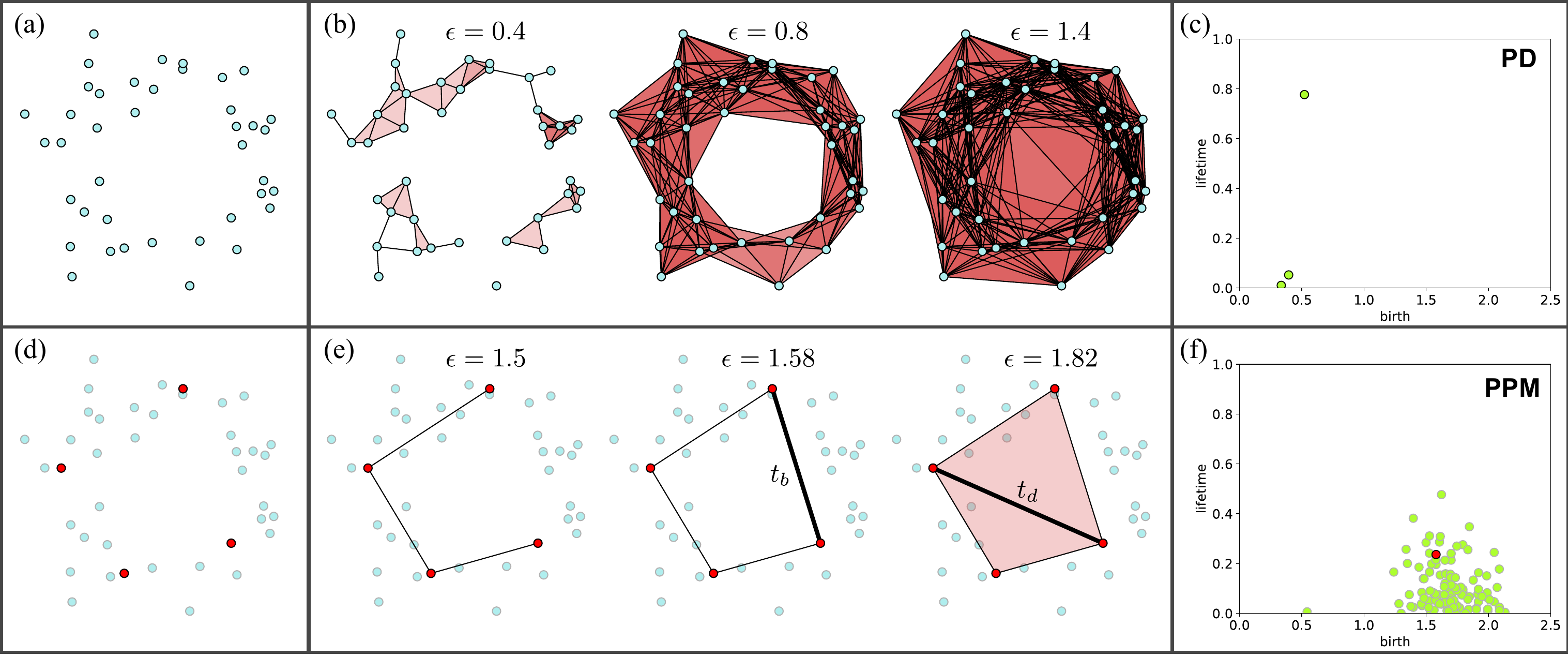}
 	\caption{An illustration of PH and PPMs. (a) An example point cloud $X$. (b) Snapshots of the Vietoris-Rips filtration $X_\epsilon$ of $X$ at various $\epsilon$. Edges are added between $x_i$ and $x_j$ when $d(x_i, x_j) > \epsilon$ and higher simplices are added when all pairwise distances are greater than $\epsilon$. (c) The dimension $1$ persistence diagram of $X$ in birth-lifetime coordinates. The one point with large lifetime represents the fact that there is a hole in the dataset which persists through multiple scales. (d) An example of a subsampling (in red) of $4 = 2q+2$ points when $q=1$. (e) Snapshots of the Vietoris-Rips filtration of the subsample, where the distances of the bold lines are $t_b$ and $t_d$. (f) The dimension $1$ principal persistence measure of $X$, where the point given by the example subsample is shown in red.}
 	\label{fig:expos}
\end{figure}

\paragraph*{Persistent Homology.}
Let $X = \{x_i\}_{i=1}^N$, where $x_i \in \R^n$. \emph{Persistent homology}~\citep{edelsbrunner_computational_2010} of dimension $q \in \N$ builds a multi-scale topological summary of $X$ in three steps:
\begin{enumerate}
    \item Construct a sequence of topological spaces $X_\epsilon$ representing the point cloud at a scale parameter $\epsilon > 0$, equipped with inclusion maps $X_\epsilon \hookrightarrow X_{\epsilon'}$ for $\epsilon < \epsilon'$ (see~\Cref{fig:expos}(b)).
    \item Compute the \emph{dimension $q$ homology} of $X_\epsilon$ to obtain topological properties at each scale.
    \item Track the \emph{birth}, $b$, and \emph{lifetime}, $\ell$, of topological features across scales by using the induced maps $H_q(X_\epsilon) \to H_q(X_{\epsilon'})$, and summarize this information as a multi-set $\PH_q(X) = \{(b_i, \ell_i)\}_{i=1}^r$ called a \emph{persistence diagram}\footnote{Note that birth-lifetime coordinates are a linear transformation of the more standard birth-death coordinates. We use lifetime coordinates to simplify the kernel expressions in the following section.} (see~\Cref{fig:expos}(c)).
\end{enumerate}
The points $(b, \ell) \in \PH_q(X)$ in a persistence diagram are valued in the quotient of the half plane
\begin{align}
    \Omega \coloneqq \{(b, \ell) \in \R^2 \, : \, \ell \geq 0\} / \{ \ell = 0\},
\end{align}
as topological features $(b,\ell) \in \PH_q(X)$ where points with trivial lifetime $\ell=0$ are equivalent to the feature not existing. We view this as a pointed quotient metric space $(\Omega, d, *)$, where $d$ is the quotient of the Euclidean metric on $\R^2$ and $*$ represents the collapsed point $\{\ell=0\}$.

\paragraph*{Persistent Homology of Small Point Clouds.}
It is shown in~\citep[Theorem 4.4]{gomez_curvature_2024} that $\PH_q$ of a point cloud $S$ with exactly $ 2q+2$ points has at most a single topological feature, and can be explicitly computed as follows. Given a point $x \in S$, let $x^{(1)}, x^{(2)} \in S$ denote the points such that $d(x, x^{(1)}) \geq d(x, x^{(2)}) \geq d(x, a)$ for all $a \in S - \{x^{(1)}, x^{(2)}\}$. Then, 
\begin{align} \label{eq:simple_ph}
    \PH_q(S) = \{(t_b, t_d - t_b)\}, \quad t_b \coloneqq \max_{x \in S} d(x, x^{(2)}), \quad t_d \coloneqq \min_{x \in S} d(x, x^{(1)})
\end{align}
whenever $t_d \geq t_b$, and $\PH_q(S) = \{*\}$ otherwise (see~\Cref{fig:expos}(e)). As we will exclusively consider $\PH_q$ of $2q+2$ points, we will consider this as a map $\PH_q : (\R^n)^{2q+2} \to \Omega$.
We emphasize that~\Cref{eq:simple_ph} is a significant simplification of the full persistent homology computation~\citep{otter_roadmap_2017}, and is the key to parallelized computations discussed in~\Cref{ssec:exp_setup}.

\paragraph*{Principal Persistence Measures.}
Principal persistence measures (PPMs) of dimension $q$~\citep{gomez_curvature_2024} contains $\PH_q$ of all subsamples $S$ of a point cloud $X$ with exactly $|S| = 2q+2$ points. More formally, we will consider the more general setting of probability measures on $\R^n$ with compact support rather than point clouds\footnote{This is a special case of the metric measure spaces used in~\citep{gomez_curvature_2024}. Furthermore, we can associate a point cloud $X = \{x_i\}_{i=1}^N \subset \R^n$, with the uniform probability measure $\mu_X$ on $X$.} on $\R^n$. Then, the \emph{PPM of dimension $q$} is defined as 
\begin{align}
    \PPM_q: \cP_c(\R^n) \to \cP(\Omega), \quad \PPM_q(\mu) \coloneqq (\PH_q)_* \mu^{\otimes (2q+2)}
\end{align}
where $\mu^{\otimes n}$ is the product measure on $(\R^n)^{2q+2}$, and $(\PH_q)_*$ is the pushforward map. In other words, we take $2q+2$ i.i.d.~samples from $\mu$ and compute $\PH_q$ on each collection to obtain a probability measure on $\Omega$ (see~\Cref{fig:expos}(f)).

\paragraph*{Metrics and Stability.}
Let $p \geq 1$, and let $W_p$ denote the $p$-Wasserstein metric on $\R^n$ and $\Omega$. A key property shown in~\citep[Theorem 3.8, Theorem 4.11]{gomez_curvature_2024} is that PPMs are \emph{stable}: 
\begin{align} \label{eq:ppm_stability}
    W_p(\PPM_q(\mu), \PPM_q(\nu)) \leq C_q W_p(\mu, \nu), \quad \text{for all} \quad \mu, \nu \in \cP_c(\R^n)
\end{align}
where $C_q > 0$ is a constant which depends on $q$. In particular, $p$-Wasserstein metrics on $\Omega$ for PPMs is the analogue of the partial $p$-Wasserstein distance for persistence diagrams.

\section{Maximum Mean Discrepancy for PPMs}

In order to further reduce the computational cost and obtain smoothness properties, we will use maximum mean discrepancy (MMD) metrics to compare PPMs. The kernels defined here adapted from the \emph{persistence weighted kernels} of~\cite{kusano_persistence_2016}. We assume basic familiarity with kernels and refer the reader to~\Cref{apx:background_kernel} for background. 

\textbf{Bounded PPMs and Notation.}
Throughout this section, we work with \emph{bounded} PPMs valued in 
\begin{align}
    \Omega_T \coloneqq \{(b,\ell) \in [0,T]^2 \, : \ell \geq 0\} / \{\ell=0\}
\end{align}
for some $T > 0$. We continue to denote the collapsed point by $*$. Note that for $\mu \in \cP_c(\R^n)$, where the support of $\mu$ has diameter $T$, we have $\PPM_k(\mu) \in \cP(\Omega_T)$. In order to simplify notation, we use $\Omega = \Omega_T$ throughout this section. We use the notation $z = (b, \ell)$ for elements in both $[0,T]^2$ and $\Omega$.

\paragraph*{Kernels on $\Omega$.}
Following the construction in~\cite{kusano_persistence_2016}, we introduce a procedure to turn a  kernel $k$ on $[0,T]^2$ into a kernel on $\Omega$. Suppose $k: [0,T]^2 \times [0,T]^2 \to \R$ is a kernel, where $\cH$ is its reproducing kernel Hilbert space (RKHS), and let $\Phi: [0,T]^2 \to \cH$ be the associated feature map given by $\Phi(z) = k(z, \cdot)$. We define a feature map $\Phi_\Omega : \Omega \to \cH$ into the same RKHS by
\begin{align}
    \Phi_\Omega(z) = \ell\cdot\Phi(z) = \ell \cdot k(z, \cdot) \hspace{4pt}\text{when} \hspace{4pt} \ell > 0 \quad \text{and} \quad \Phi_\Omega(*) = 0.
\end{align}
Then, for $z_1, z_2 \in \Omega -\{*\}$, the associated kernel $k_\Omega : \Omega \times \Omega \to \R$, satisfies
\begin{align} \label{eq:k_omega}
    k_\Omega(z_1, z_2) \coloneqq \langle \Phi_\Omega(z_1), \Phi_\Omega(z_2)\rangle_{\cH} = \ell_1 \cdot \ell_2 \cdot k(z_1, z_2)
\end{align}
by the reproducing kernel property of $\cH$, and $k_\Omega(*, z) = k_\Omega(z, *) = 0$. Note that $k_\Omega$ is continuous on $\Omega \times \Omega$. We denote the RKHS of $k_\Omega$ by $\cH_\Omega$, where we have an embedding $\cH_\Omega \hookrightarrow \cH$ by definition.

\paragraph*{Characteristic Kernels on $\Omega$.}
Recall that a kernel $k: \cX \times \cX \to \R$ (with associated feature map $\Phi: \cX \to \cH$) is \emph{characteristic with respect to probability measures $\cP(\cX)$} if the \emph{kernel mean embedding}, also denoted by $\Phi: \cP(\cX) \to \cH$,
\begin{align}
    \Phi(\mu) \coloneqq \E_{x \sim \mu}[\Phi(x)],
\end{align}
is injective. The following result shows that if we start with a characteristic kernel on $[0,T]^2$, the above procedure yields a characteristic kernel on $\Omega$. This can be shown using similar methods as~\citep[Section 3.1]{kusano_persistence_2016} in the current setting, but we provide an independent proof of a slightly stronger statement in~\Cref{thm:lin_prob_in_dual} of~\Cref{apx:kernels_on_omega}.

\begin{theorem} \label{thm:main_characteristic}
    Let $k: [0,T]^2 \times [0,T]^2 \to \R$ be a kernel which is universal with respect to $C([0,T]^2)$ (or equivalently, characteristic with respect to $\cP([0,T]^2)$. Then, $k_\Omega: \Omega \times \Omega \to \R$ is characteristic with respect to $\cP(\Omega)$. 
\end{theorem}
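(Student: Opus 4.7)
The plan is to show that the kernel mean embedding $\Phi_\Omega : \cP(\Omega) \to \cH_\Omega$ is injective. The key observation is that $\Phi_\Omega(*) = 0$ means the atom of $\mu \in \cP(\Omega)$ at the collapsed point contributes nothing to $\Phi_\Omega(\mu)$, and on $\Omega - \{*\}$ the map factors as ``weight by $\ell$, then embed via $\Phi$''. This reduces the desired statement to an injectivity assertion for the underlying embedding of $k$ on the class of finite positive Borel measures on $[0,T]^2$, which is strictly larger than $\cP([0,T]^2)$.

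Concretely, suppose $\mu, \nu \in \cP(\Omega)$ satisfy $\Phi_\Omega(\mu) = \Phi_\Omega(\nu)$. I would introduce the weighted finite positive Borel measures $\tilde\mu, \tilde\nu$ on $[0,T]^2$, supported in $\{\ell > 0\}$, defined by $d\tilde\mu(z) = \ell\, d\mu(z)$ and $d\tilde\nu(z) = \ell\, d\nu(z)$. Each has total mass at most $T$, and by~\Cref{eq:k_omega} the hypothesis rewrites as $\int k(z,\cdot)\, d\tilde\mu(z) = \int k(z,\cdot)\, d\tilde\nu(z)$ in $\cH$. Pairing with any $f \in \cH$ via the reproducing property gives $\int f\, d(\tilde\mu - \tilde\nu) = 0$; density of $\cH$ in $C([0,T]^2)$ (which is exactly universality of $k$) extends this to all $f \in C([0,T]^2)$, whereupon the Riesz representation theorem --- the dual of $C(K)$ for compact $K$ is the space of finite signed regular Borel measures --- forces $\tilde\mu = \tilde\nu$.

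To pass back from $\tilde\mu = \tilde\nu$ to $\mu = \nu$, I would argue as follows: for any Borel set $A \subset \{\ell \geq \epsilon\}$ with $\epsilon > 0$, the function $\ell^{-1}$ is bounded on $A$ and $\mu(A) = \int_A \ell^{-1}\, d\tilde\mu = \int_A \ell^{-1}\, d\tilde\nu = \nu(A)$. Writing $\{\ell > 0\}$ as a countable increasing union of such sets yields $\mu|_{\Omega - \{*\}} = \nu|_{\Omega - \{*\}}$, and since both $\mu$ and $\nu$ are probability measures, the masses at $*$ must also coincide.

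The main subtlety is the transition from probability-measure characteristicness of $k$ to an equality of finite positive measures of possibly different total mass: the natural normalizations $\tilde\mu / \E_\mu[\ell]$ and $\tilde\nu / \E_\nu[\ell]$ need not be the same probability measure a priori. The density reformulation of universality handles this cleanly via Riesz duality applied on the compact set $[0,T]^2$, and this is presumably the slightly stronger statement the authors record as~\Cref{thm:lin_prob_in_dual} in~\Cref{apx:kernels_on_omega}. Everything else in the argument is a routine measure-theoretic bookkeeping of the $\ell$-weighting.
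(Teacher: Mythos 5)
Your argument is correct, and it reaches the conclusion by a more direct and self-contained route than the paper. The paper's proof in~\Cref{apx:kernels_on_omega} first transfers universality of $k$ to universality of $k_\Omega$ with respect to the Banach space $C_{\lin}(\Omega_T)$ of functions of the form $\ell \cdot f(b,\ell)$, then invokes the abstract duality theorem of Simon-Gabriel and Sch\"olkopf to obtain characteristicness with respect to the topological dual $C_{\lin}(\Omega)^*$, identifies $\cM_{\lin}(\Omega)$ inside that dual (\Cref{thm:lin_prob_in_dual}), and finally handles the atom at $*$ via the bijection $\psi(\mu) = \mu - \mu(\{*\})\delta_*$ of~\Cref{lem:homeo} together with $\Phi_\Omega(*) = 0$. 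You instead move the $\ell$-weighting onto the measures rather than the test functions: the identity $\Phi_\Omega(\mu) = \int k(z,\cdot)\, d\tilde\mu(z)$ with $d\tilde\mu = \ell\, d\mu$ reduces everything to showing that a finite signed regular Borel measure on the compact square annihilated by the dense subspace $\cH \subset C([0,T]^2)$ must vanish, which is exactly Riesz duality; you are in effect re-proving the one instance of the duality theorem that is needed, inline. The recovery of $\mu$ from $\tilde\mu$ on $\{\ell \geq 1/n\}$ and the matching of mass at $*$ are routine and correctly handled, and they play the role of~\Cref{lem:homeo}. What the paper's longer route buys is the stronger intermediate statement that $k_\Omega$ is characteristic on all of $C_{\lin}(\Omega)^* \supset \cM_{\lin}(\Omega)$ --- relevant for persistence measures of infinite cardinality as noted in the paper's Remark~1 --- plus reusable infrastructure ($C_{\lin}$, $\cP_{\lin}$, the homeomorphism $\psi$) that is needed again for \Cref{thm:metrizing_weak_conv}; your argument extends to $\cM_{\lin}(\Omega)$ with no extra work (the weighted measures $\tilde\mu$ remain finite precisely under the moment condition in the definition of $\cM_{\lin}$), but as written it establishes only the probability-measure case. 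The only point worth making explicit is that $\Phi_\Omega(\mu)$ is a well-defined Bochner integral (immediate, since $\|\Phi_\Omega(z)\|_{\cH} \leq T \sup_z \sqrt{k(z,z)} < \infty$ on the compact square) so that the pairing with $f \in \cH$ can be pulled inside the integral.
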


\paragraph*{MMD for Principal Persistence Measures.}
A characteristic kernel $k_\Omega$ on $\Omega$ induces a metric on $\cP(\Omega)$ via the norm, called the \emph{maximum mean discrepancy (MMD)}, 
\begin{align} \label{eq:MMD_def}
    \MMD_k(\nu_1, \nu_2) \coloneqq \big\|\Phi(\nu_1) - \Phi(\nu_2)\|_{\cH_{\Omega}}.
\end{align}

Let $\nu_1 = \frac{1}{N}\left(\sum_{i=1}^n \delta_{x_i} + (N-n) \delta_*\right)$ and $\nu_2 = \frac{1}{M}(\sum_{j=1}^m \delta_{y_j} + (M-m) \delta_*)$ be discrete measures in $\cP(\Omega)$, with $n$ and $m$ nontrivial points $x_i, y_j \in \Omega - \{*\}$ respectively. The MMD is given by
\begin{align} \label{eq:empirical_mmd}
    \MMD^2_k(\nu_1, \nu_2) = \frac{1}{N^2} \sum_{i,j=1}^n k_\Omega(x_i, x_j) - \frac{2}{NM} \sum_{i=1}^n \sum_{j=1}^m k_\Omega(x_i, y_j) + \frac{1}{M^2}\sum_{i,j=1}^m k_\Omega(y_i, y_j).
\end{align}
The normalization is with respect to the total (including $*$) numbers of points in $\nu_1$ and $\nu_2$, but we only compute kernels between nontrivial points since $k_\Omega(*, z) = k_\Omega(z, *) = 0$.
This enables the use of computable MMD metrics for PPMs. While the stability property in~\Cref{eq:ppm_stability} may no longer hold, MMD metrics yield the same topology on the space of probability measures (see also~\citep[Theorem 3.2]{kusano_persistence_2016} in the finite setting). While a related result in a different context is given in~\citep[Proposition 5.1]{divol_understanding_2021}, we provide an independent proof in~\Cref{apx:weak_topology}.

\begin{theorem} \label{thm:metrizing_weak_conv}
    Let $k$ be a characteristic kernel on $\Omega$. The $p$-Wasserstein metric $W_p$ and the MMD metric $\MMD_k$ induce the same topology on $\cP(\Omega)$. 
\end{theorem}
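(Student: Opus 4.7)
The plan is to show that both $W_p$ and $\MMD_k$ induce the topology of weak convergence on $\cP(\Omega)$, where $\Omega = \Omega_T$ is the quotient of the compact square $[0,T]^2$ by $\{\ell = 0\}$ and is therefore a compact metric space. For $W_p$, I would invoke the classical result in optimal transport that on any compact metric space the $p$-Wasserstein distance metrizes weak convergence of Borel probability measures. For the implication (weak $\Rightarrow$ MMD), I would expand
\begin{equation*}
    \MMD_k^2(\nu_n, \nu) = \int k_\Omega \, d(\nu_n \otimes \nu_n) - 2\int k_\Omega \, d(\nu_n \otimes \nu) + \int k_\Omega \, d(\nu \otimes \nu),
\end{equation*}
and observe that if $\nu_n \to \nu$ weakly then $\nu_n \otimes \nu_n$ and $\nu_n \otimes \nu$ converge weakly to $\nu \otimes \nu$ on the compact space $\Omega \times \Omega$, using Stone--Weierstrass to reduce weak convergence of product measures to convergence against tensor products $f(z) g(z')$. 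Since $k_\Omega$ is bounded and continuous on $\Omega \times \Omega$, each of the three integrals converges and the sum tends to $0$.

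For the implication (MMD $\Rightarrow$ weak), I would use the subsequence principle. Prokhorov's theorem gives that $\cP(\Omega)$ is weakly compact, so from any subsequence of $\nu_n$ I can extract a weakly convergent sub-subsequence $\nu_{n_k} \to \nu^\ast$. By the previous direction $\MMD_k(\nu_{n_k}, \nu^\ast) \to 0$, so the triangle inequality yields $\MMD_k(\nu, \nu^\ast) = 0$, and the characteristic property of $k_\Omega$ forces $\nu = \nu^\ast$. Since every subsequence of $\nu_n$ has a sub-subsequence converging weakly to $\nu$, the full sequence converges weakly to $\nu$, completing the equivalence of the two topologies.

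The main delicate point is the treatment of the collapsed point $*$: one must verify that $k_\Omega$, defined by $\ell_1 \ell_2 \, k(z_1, z_2)$ off $*$ and by $0$ whenever either argument equals $*$, extends to a jointly continuous function on $\Omega \times \Omega$. This reduces to the observation that $z \to *$ in the quotient metric forces $\ell \to 0$, so the factor $\ell_1 \ell_2$ collapses the formula continuously onto the value $0$ at $*$. Boundedness $|k_\Omega| \leq T^2 \|k\|_\infty$ then follows from compactness of $[0,T]^2$ together with continuity of $k$, after which everything reduces to standard arguments for MMD metrizing weak convergence on compact metric spaces.
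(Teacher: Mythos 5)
Your proof is correct, and its skeleton is the same as the paper's: both arguments reduce the claim to showing that $W_p$ and $\MMD_k$ each metrize the weak topology on $\cP(\Omega)$, and both handle the Wasserstein half by citing the classical fact that $W_p$ metrizes weak convergence on a compact metric space (the paper points to Theorem 6.9 of Villani). The difference is in the MMD half: the paper simply invokes Theorem 3.2 of Sriperumbudur et al.~(2016), checking only that $\Omega$ is a compact Polish space and that $k_\Omega$ is a continuous bounded kernel, whereas you reprove that result from scratch in the compact setting --- the product-measure expansion of $\MMD_k^2$ together with Stone--Weierstrass gives (weak $\Rightarrow$ MMD), and Prokhorov compactness plus the subsequence principle and the characteristic property gives (MMD $\Rightarrow$ weak). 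Both steps are sound; note that the subsequence principle you use is valid in any topological space, so no metrizability of the weak topology needs to be assumed a priori. Your approach is more self-contained and makes transparent exactly where compactness of $\Omega$ and the characteristic property enter; the paper's is shorter and defers the analytic content to the cited reference. Your attention to the joint continuity of $k_\Omega$ at the collapsed point $*$ is a genuine plus: the paper asserts this continuity without proof, and it is precisely the hypothesis needed to apply the cited theorem, so your observation that convergence to $*$ in the quotient metric forces $\ell \to 0$ and hence $\ell_1 \ell_2 k(z_1,z_2) \to 0$ fills in a detail the paper leaves implicit.
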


\begin{remark} \label{rem:full_persistence}
    By viewing persistence diagrams as measures~\citep{divol_understanding_2021, giusti_signatures_2023, bubenik_virtual_2022}, persistence diagrams can be viewed as elements in $\cM_{\lin}(\Omega)$, defined in~\Cref{eq:Mlin_def}. This includes persistence diagrams with possibly infinite cardinality (with finite total persistence).
    While in~\cite{kusano_persistence_2016}, analogous kernels are defined for finite persistence diagrams, our results hold for \emph{persistence measures} in $\cM_{\lin}(\Omega)$.
\end{remark}

\section{Topological Regularization with PPMs}

In this section, we introduce our proposed topological regularizer based on computing the PPM of probability measures and comparing the PPMs using MMD. Let $k_\Omega$ be a characteristic kernel as defined in the previous section. Returning to the notation of~\Cref{sec:gan}, we define our dimension $q$ topological regularizer, PPM-Reg, on the latent space $\R^L$ by $\cT_q: \cP_c(\R^L) \times \cP_c(\R^L) \to \R$ by
\begin{align}
    \cT_q(\mu, \nu) \coloneqq \MMD_{k_{\Omega}}(\PPM_q(\mu), \PPM_q(\nu)) = \big\| \Phi_\Omega(\PPM_q(\mu)) - \Phi_\Omega(\PPM_q(\nu))\big\|_{\cH_{\Omega}}.
\end{align}

When we apply this in the GAN setting, we consider $\cT_q\left( d_{\boldsymbol{\theta}}(\mu),d_{\boldsymbol{\theta}}(g_{\boldsymbol{\omega}}(\nu))\right)$, where $\nu \in \cP_c(\R^N)$ is the noise measure, and $\mu \in \cP_c(\R^M)$ is the data measure. Let $\fT_q : \R^G \times \R^D \to \R$ be
\begin{align}
    \fT_q(\bomega, \btheta) \coloneqq \cT_q\left( d_{\boldsymbol{\theta}}(\mu),d_{\boldsymbol{\theta}}(g_{\boldsymbol{\omega}}(\nu))\right).
\end{align}

Our main result of this section is to show that $\fT_q$ is smooth with respect to $\bomega$ and $\btheta$ given sufficient smoothness conditions on the underlying measures and the discriminator and generator. Recall that a function $f: \R^n \to \R^m$ is a $C^1$ function if all first derivatives of $f$ are continuous. The following is our main theoretical result, proved in~\Cref{apx:cont_grad}.

\begin{theorem} \label{thm:main_smooth_gradient}
    Let $k_\Omega$ be a characteristic kernel. Suppose $\mu \in \cP_c(\R^M)$ and $\nu \in \cP_c(\R^N)$ have $C^1$ densities. Suppose the joint functions $G: \R^G \times \R^N \to \R^M$ defined by $G(\bomega, x) = g_\bomega(x)$ and $D: \R^D \times \R^M \to \R^L$ defined by $D(\btheta, y) = d_\btheta(y)$ be $C^1$ functions. Then, $\fT_q$ is a $C^1$ function wherever the PPM is not the trivial measure at the origin. 
\end{theorem}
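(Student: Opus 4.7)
The plan is to first show $\fT_q^2$ is $C^1$ jointly in $(\bomega, \btheta)$, then take the positive square root on the non-trivial set. Expanding the MMD norm and undoing the pushforward $\PPM_q(\mu) = (\PH_q)_* \mu^{\otimes(2q+2)}$, I would write
$\fT_q^2(\bomega, \btheta) = A(\btheta) - 2B(\bomega, \btheta) + C(\bomega, \btheta)$,
where each term is an integral on a compact domain of $h(\mathbf{z}, \mathbf{z}') \coloneqq k_\Omega(\PH_q(\mathbf{z}), \PH_q(\mathbf{z}'))$ composed with $D(\btheta, \cdot)$ and/or $(D \circ G)(\btheta, \bomega, \cdot)$, integrated against $\rho_\mu^{\otimes}$ and $\rho_\nu^{\otimes}$. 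For instance, $B(\bomega, \btheta) = \int h\bigl(D(\btheta, x_1), \ldots, (D\circ G)(\btheta, \bomega, y_{2q+2})\bigr)\, \prod_i \rho_\mu(x_i) \prod_j \rho_\nu(y_j)\, dx\, dy$.

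Next I would establish the regularity of $h : (\R^L)^{4q+4} \to \R$. Using \eqref{eq:simple_ph} together with $k_\Omega(z, z') = \ell \ell' k(z, z')$ and $k_\Omega(*, \cdot) = 0$, $h$ factors as $\max(0, t_d - t_b) \cdot \max(0, t_d' - t_b') \cdot \tilde k(t_b, t_d, t_b', t_d')$ with $\tilde k$ smooth and $t_b, t_d$ built from max/min of pairwise Euclidean distances. Hence $h$ is bounded, globally Lipschitz, and $C^1$ off a closed Lebesgue-null stratified set $\Sigma$ corresponding to pairwise-distance ties (where an argmax/argmin in $t_b, t_d$ is ambiguous) and to $\{t_d = t_b\}$. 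Since $D, G$ are jointly $C^1$ and $\rho_\mu, \rho_\nu$ are $C^1$ with compact support, the composite integrand is Lipschitz in $(\bomega, \btheta)$ uniformly on the compact integration domain. Applying dominated convergence to the difference quotients, with the Lipschitz constant of $h$ as dominator, gives differentiability of $A, B, C$ with the expected chain-rule formulas such as $\partial_\btheta B = \int \nabla h \cdot \partial_\btheta \Psi \cdot \prod \rho\, dx\, dy$, where $\Psi$ denotes the combined $C^1$ map supplying the arguments of $h$.

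The main obstacle is continuity of the gradient in $(\bomega, \btheta)$: $\nabla h$ has genuine jumps across strata of $\Sigma$, and the preimage $\Psi^{-1}(\Sigma)$ moves with the parameters. The key point is that the $C^1$ densities $\rho_\mu, \rho_\nu$ act as a built-in mollifier once integrated. Concretely, for parameters $(\bomega, \btheta), (\bomega', \btheta')$ in a compact set, the set of $(x, y)$ on which $\Psi(\bomega, \btheta, x, y)$ and $\Psi(\bomega', \btheta', x, y)$ lie on opposite sides of a stratum of $\Sigma$ is contained in a $\delta$-thickening $\Psi^{-1}(\Sigma^\delta)$ with $\delta = O(\|(\bomega, \btheta) - (\bomega', \btheta')\|)$; its $\prod \rho$-mass is $O(\delta)$ by the coarea inequality applied to the $C^1$ map $\Psi$, combined with the boundedness of $\rho$ on its compact support. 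The vanishing of $k_\Omega$ along the boundary stratum $\{t_d = t_b\}$ eliminates the contribution from that stratum. Together with the uniform bound $|\nabla h| \leq L$ a.e.\ and a second dominated-convergence step, this yields continuity of $\partial A, \partial B, \partial C$, so $\fT_q^2 \in C^1$. Finally, where the PPM embedding difference is nonzero (the ``non-trivial'' hypothesis, equivalently $\fT_q^2 > 0$), the square root is $C^1$ by the chain rule. The trickiest piece is the uniform, stratum-by-stratum coarea estimate on the thickening of $\Sigma$, especially near the lower-dimensional strata where several tie hyperplanes intersect, but this does not affect the final conclusion.
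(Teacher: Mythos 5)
Your route is genuinely different from the paper's, and the difference matters. The paper never differentiates through the non-smooth map $\PH_q$ at all: in~\Cref{thm:expected_ppm_smooth} it pushes all parameter dependence into the density, writing $\Phi_\Omega(\PPM_q(h_\theta(\mu))) = \int \Phi_\Omega\circ\PH_q(x)\, f(\theta,x)\,dx$ where $f(\theta,\cdot)$ is the (assumed $C^1$) density of the product measure in latent space. The integrand $\Phi_\Omega\circ\PH_q$ is then a \emph{fixed} bounded $\cH_\Omega$-valued function of $x$, and differentiating under the Bochner integral only touches $\partial f/\partial\theta_i$; the stratified singular set $\Sigma$ of $\PH_q$ never enters. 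The square root at the end is handled identically to yours. Your version instead keeps the parameters inside the integrand as $h\circ\Psi$ and pays for it with the Lipschitz/dominated-convergence and thickening analysis. What the paper's change of variables buys is the complete disappearance of exactly the obstacle you identify as the main one.

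There are two concrete gaps in your argument as written. First, both the a.e.\ existence of the pointwise derivative of $h\circ\Psi$ and the continuity of $\nabla h\circ\Psi$ under the integral require that $\Psi(\bomega,\btheta,\cdot)_*(\prod\rho)$ assigns zero mass to $\Sigma$; this is \emph{not} a consequence of $D,G$ being $C^1$ (a rank-deficient or locally constant discriminator concentrates the pushforward on the tie strata and on $\{t_d=t_b\}$). The paper implicitly excludes this by assuming the latent product measure has a density; you would need to import the same hypothesis, at which point mere nullity of $\Psi^{-1}(\Sigma)$ plus dominated convergence already gives continuity of the gradient and your coarea estimate is unnecessary. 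Second, the quantitative claims you do make are too strong: the $O(\delta)$ mass bound for $\Psi^{-1}(\Sigma^\delta)$ needs uniform transversality of $\Psi$ to $\Sigma$, which $C^1$ regularity alone does not provide (near a critical value the preimage of a $\delta$-neighbourhood can have mass of order $\sqrt{\delta}$ or worse); and the assertion that the stratum $\{t_d=t_b\}$ contributes nothing because $k_\Omega$ vanishes there is incorrect as stated, since the gradient of $u\mapsto\max(0,u)\,g$ jumps by $g\neq 0$ across $u=0$ even though the function vanishes. Neither issue is fatal once the density assumption is in place, but the cleaner fix is the paper's: move the parameters into the density and integrate the fixed non-smooth feature map against it.
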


\section{Experiments and Results} \label{sec:experiments}
We provide empirical experiments which demonstrates the efficacy of PPM-Reg as a topological regularizer. First, in~\Cref{Shape_Match}, we provide an expository shape matching experiment to illustrate the behavior of PPM-Reg, and provide computational comparisons. Next, in~\Cref{I_Gen}, we apply PPM-Reg to a GAN-based generative modelling problem, consistently improving the \emph{generative} quality of GANs. Finally, in~\Cref{ssec:SSL}, we consider a GAN-based semi-supervised learning problem, which demonstrates the effectiveness of PPM-Reg in improving the \emph{discriminative} ability of GANs. 
Due to space limitations, we have placed implementation details and additional experiments for each of the three settings in~\Cref{apx:shape_matching},~\Cref{apx:UIG}, and~\Cref{apx:SSL}.%
\footnote{Code \& supp.: \url{https://github.com/htwong-ai/ScalableTopologicalRegularizers}.}

\subsection{Computational Setup and Implementation Overview} \label{ssec:exp_setup}
\paragraph*{Cramer Distance.}
The Wasserstein and Cramer distance are both probability metrics that are sensitive to the geometry of the change in distribution~\citep{bellemare2017cramer}. Moreover, the Cramer distance does not depend on hyperparameters which simplifies our comparison. We primarily use the Cramer metric as our main loss function $\mathcal{L}$. Following the definition of~\citep{bellemare2017cramer}, for $\mu, \nu \in \cP(\R^d)$. The Cramer Distance $\mathcal{E}(\mu, \nu)$ is defined as
\begin{align*}\label{eq:CR_pre}
    \mathcal{E}(\mu, \nu) \coloneqq \E_{x \sim \mu}\big[\mathcal{D}(x)\big] - \E_{y \sim \nu} \big[ \mathcal{D}(y)\big], \quad \mathcal{D}(z)\coloneqq \E_{y^{\prime} \sim \nu} \big[ \|z - y^{\prime} \|_2 \big] - \E_{x^{\prime} \sim \mu} \big[ \|z - x^{\prime} \|_2 \big]
\end{align*}
where $x,x'$ (resp.~$y,y'$) are independent random variables with law $\mu$ (resp.~$\nu$). We do not take the gradient estimation in~\citep[Appendix C.3]{bellemare2017cramer} as we obtain sufficient samples.
\paragraph*{Implementation of PPM.}
For all experiments, we use $s$ subsamples from $\mu^{\otimes (2q+2)}$ to approximate the PPM (using the same number of subsamples for dimension $0$ and $1$). 
The persistent homology of each subsample is computed using~\Cref{eq:simple_ph} in parallel on the GPU.
Throughout these experiments, our base kernel is the radial basis function (RBF) kernel $k_{\RBF}(z_1, z_2) = \exp\left(-\|z_1 - z_2\|^2/2 \sigma\right)$, where the width $\sigma > 0$ is a hyperparameter. Thus, the induced kernel $k_\Omega: \Omega \times \Omega \to \R$ from~\Cref{eq:k_omega} evaluated on $z_i = (b_i, \ell_i) \in \Omega$ is
\begin{align}
    k_\Omega(z_1, z_2) = \ell_1\cdot\ell_2 \exp\left(-\|z_1 - z_2\|^2/2 \sigma\right).
\end{align}
We use~\Cref{eq:empirical_mmd} to compute the MMD metric between PPMs. Furthermore, we use a weighted combination of dimension $0$ and $1$ PPM in our topological regularizer, such that
\begin{align}
    \cT = \lambda_0 \cT_0 + \lambda_1 \cT_1,
\end{align}
where the \emph{weights} $\lambda_0, \lambda_1 > 0$ are hyperparameters. We will call this \textbf{PPM-Reg}.

\subsection{Shape Matching}
\label{Shape_Match}

\begin{figure}[!h]
\centering
    \includegraphics[width=1.0\linewidth]{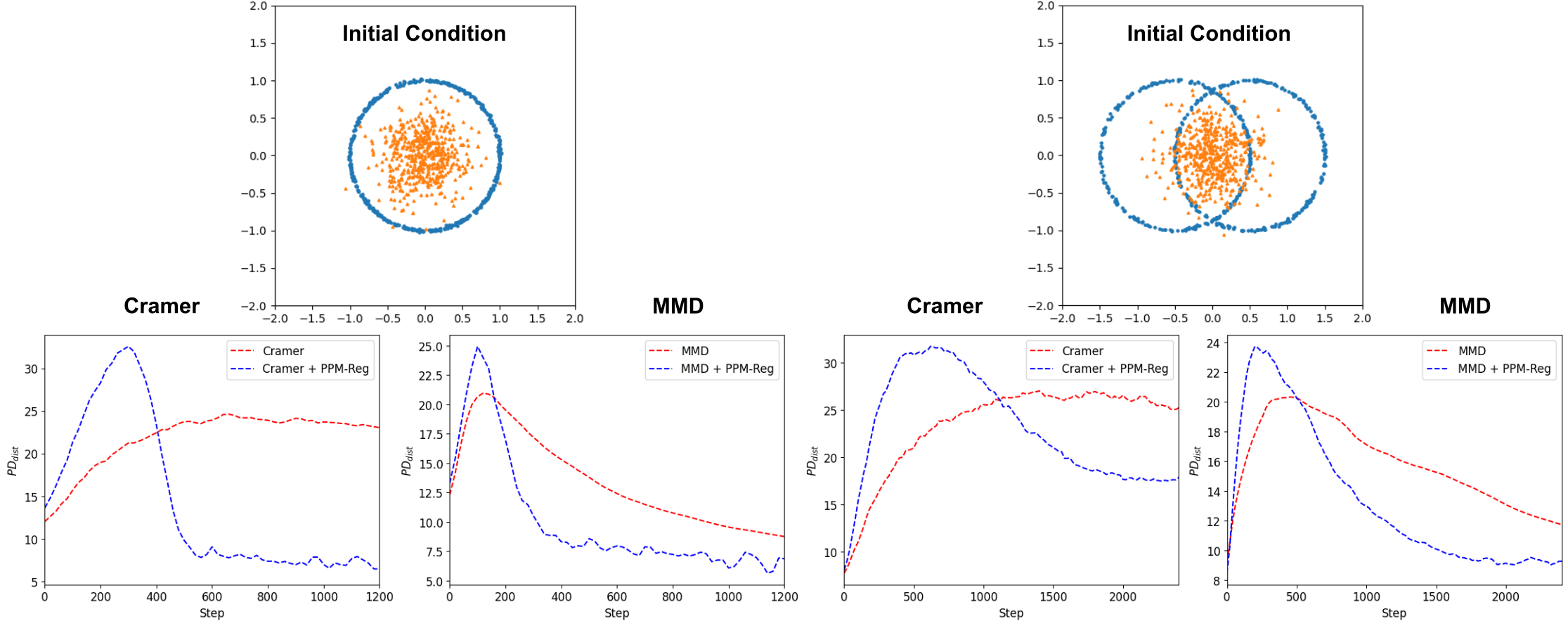}
    \caption{Visual example of PPM-Reg in a shape matching experiment using Cramer or MMD as the main loss function. 1st row: Plots of a reference point cloud (in blue) and the initial condition of a random point cloud (in orange). 2nd Row: Plots of 2-Wasserstein distance between 1-dimensional persistent homology between the reference shape and training shape over optimization steps.}
    \label{fig:shape_matching}
\end{figure}

Our task is to optimize the individual points of a point cloud to match the ``shape'' of a reference point cloud using a loss function of the form $\cL + \cT$, which operates directly on the ambient space of the point clouds.
We choose $\cL$ to either be the Cramer distance~\citep{bellemare2017cramer} or an MMD metric using an RBF kernel (with width $\sigma = 0.1$). 
Our aim in this expository experiment is twofold.
\begin{enumerate}
    \item Demonstrate the ability of PPM-Reg to regularize \emph{topological} features in point clouds by comparing the true (non-subsampled) persistence diagrams of the trained and reference shapes. Our focus is on showing that this occurs near the beginning of the optimization, since in GAN settings, the regularization is important away from global minima (see Introduction).
    \item Show the computational efficiency of PPM-Reg, which enables its use in later experiments.
\end{enumerate}

\paragraph*{Shape Matching Experiment.} 

Our main results are summarized in~\Cref{fig:shape_matching}. We choose two reference shapes in $\R^2$ for visualization purposes: a circle, and the union of two intersecting circles. 
In the second row of~\Cref{fig:shape_matching}, we plot the 2-Wasserstein distance between the 1-dimensional full (non-subsampled) persistence diagrams between the fixed reference and the trained point cloud, as a function of optimization steps. 
In each case, we see that adding PPM-Reg significantly reduces the topological distance. An interesting feature of each of these plots is that there is an initial spike in the PD distance when using PPM-Reg. Empirically, this is due to the fact that the point cloud must first move through a regime with trivial topological structure before PPM-Reg can faithfully match the topology of the reference. The training behavior is best understood by observing the dynamics of the optimization, and we provide animations of these experiments in the supplementary material.

\paragraph*{Computational Comparisons.}
The efficiency of the PPM-Reg is derived from two major components: the parallelizable PPM and the iterative-free MMD.~\Cref{tab:Empirical_speed} empirically shows the computational benefit of each component as the size of the point cloud and the number of subsamples $s$ are varied. We use Cramer as the main loss, and consider the computational cost of using PPM-Reg, W-PPM-Reg and PD-Reg. \textbf{PD-Reg} computes the 2-Wasserstein distance between dimension $0$ and $1$ full persistent homology with Vietoris-Rips filtration. \textbf{W-PPM-Reg} computes the 2-Wasserstein distance between PPMs of dimension $0$ and $1$. We use the \texttt{torch-topological} package~\citep{torch_topological} to compute persistent homology and Wasserstein distances.

Our aim is to compare the real-world usage of these methods using the circle experiment. Thus, PPM-Reg computations are performed on a GPU, while PD-Reg and W-PPM-Reg uses hybrid CPU-GPU methods. 
The computational cost of PD-Reg grows exponentially with respect to the size of the point cloud. While the computational cost of W-PPM-Reg is sublinear with respect to the size of the point cloud, the cost is exponential with respect to the number of subsamples $s$. Remarkably, due to parallelization, PPM-Reg is sublinear with respect to the number of subsamples $s$ and is nearly constant as the size of the point cloud increases. In the following experiments, we find that $s=1024$ and $s=2048$ performs well in practice. In summary, using MMD mediates the drawback of the increased number of features extracted by PPM, resulting in our significantly faster PPM-Reg. With parallelization, our pure \texttt{PyTorch} implementation of PPM-Reg outperforms highly optimized low-level CPU implementations used in \texttt{torch-topological}.

\begin{table}[th]
   \centering
   \setlength{\tabcolsep}{2pt}
   \scriptsize
   \begin{tabular}{cccccccc}
     \toprule
     \multicolumn{1}{c}{}&\multicolumn{3}{c}{Cramer + PPM-Reg}& \multicolumn{3}{c}{Cramer + W-PPM-Reg}& \multirow{2}{*}{Cramer + PD-Reg}              \\
     \cmidrule(r){2-7}
      No. points & s = 512 & s = 1024 & s = 2048 & s = 512 & s = 1024 & s = 2048  &   \\
     \midrule
     128 & $0.55\pm0.005$   &  $0.61\pm0.007$  & $0.98\pm 0.004$ &$3.06\pm0.033$ &$13.28\pm0.198$ &$73.00\pm3.323$ &$1.99\pm 0.048$  \\
     256 &  $0.56\pm0.008$& $0.61\pm0.005$ &  $0.99\pm0.005$ & $3.25\pm0.083$ &$14.04\pm0.187$  &$80.11\pm2.545$&$10.43\pm0.075$\\
     512  &  $0.56 \pm 0.010$ & $0.61\pm0.014$    & $0.98\pm0.006$&$3.43\pm0.111$ &$16.43\pm0.458$ &$91.29\pm3.081$ &$107.11\pm2.837$ \\ 
     1024 &  $0.57\pm0.005$ & $0.61\pm0.005$  & $1.00\pm 0.007$&$3.90\pm0.092$ &$19.45\pm0.468$ &$121.76\pm3.424$ &$655.58\pm11.823$\\
     \bottomrule
   \end{tabular}
      \caption{Running time of 100 gradient steps (in seconds) in matching circle form randomly initialize gaussian in $\R^2$ with GPU computation enable. The averages are computed over 10 runs.} 
      \label{tab:Empirical_speed}
\end{table}

\paragraph*{Imperfect Convergence.} In~\Cref{apx:IC}, we observe the same trends in additional modified experiments, which prevent the centroid of the trained shape from converging to the centroid of the reference. This is done to mimic the GAN setting where training algorithms often converge to saddle points rather than global minima~\citep{berard2019closer, liang2019interaction}.

\subsection{Unconditional Image Generation}\label{I_Gen}

Next, we consider the use of PPM-Reg in an unconditional image generation task, which is the standard benchmark to evaluate GANs~\citep{goodfellow2014generative,arjovsky2017wasserstein}.

\paragraph*{Network Architecture and Implementation Details.} We use a ResNet based CNN as the generator $g_\bomega$, which takes a 128-dimensional noise vector as input. We use a CNN as the discriminator $d_\btheta$ and the output of the network is a 128-dimensional latent vector. We compare the Cramer value function $\cV = \cL$, with the use of PPM-Reg $\cV = \cL + \cT$. As our network differs from~\citep{bellemare2017cramer}, we retrain both regularized and unregularized networks for a fair comparison.

\paragraph*{Dataset and Evaluation Metrics.}
We consider the CelebA~\citep{liu2015deep} and AnimeFace~\citep{spencer_churchill_brian_chao_2019} datasets. Images are centered and resized to $32 \times 32$. 
While the Frech\'et Inception Distance (FID)~\citep{heusel2017gans} is a popular metric to evaluate the distance between generated and real images, recent empirical work has thoroughly investigated several drawbacks of FID~\citep{horak_topology_2021,stein2024exposing,jayasumana2024rethinking}. Instead, we adopt three metrics:
\begin{enumerate}
    \item CMMD~\citep{jayasumana2024rethinking}: MMD of CLIP embeddings~\citep{radford2021learning},
    \item FD\textsubscript{Dinov2}~\citep{stein2024exposing}: Frech\'et Distance of Dinov2~\citep{oquab2024dinov2} embeddings
    \item WD\textsubscript{latent}: 2-Wasserstein distance of CLIP embeddings~\citep{radford2021learning}
\end{enumerate}
In~\Cref{Quantitative}, these are computed by sampling / generating 10K images from the data set / network.
We report CMMD, FD\textsubscript{Dinov2} and WD\textsubscript{latent} using the epoch with the smallest CMMD.

\begin{table}[th]
   \centering
   \setlength{\tabcolsep}{3pt}
   \small
   \begin{tabular}{ccccccc}
     \toprule
     \multicolumn{1}{c}{}&\multicolumn{3}{c}{AnimeFace}&\multicolumn{3}{c}{CelebA}                     \\
     \cmidrule(r){2-7}
        & CMMD & FD\textsubscript{Dinov2} & WD\textsubscript{latent}& CMMD & FD\textsubscript{Dinov2}& WD\textsubscript{latent} \\
     \midrule
     Cramer~\citep{bellemare2017cramer}     &0.73  & 953.99 & 0.6294  &0.72   &722.86  &0.6795 \\
     Cramer + PPM-Reg     &0.56  & 780.68 & 0.6080  &0.58   & 700.73 & 0.6666 \\
     \bottomrule
   \end{tabular}
      \caption{Quantitative evaluation on $32\times 32$ image generation, values are reported at the epoch with the smallest CMMD.}
   \label{Quantitative}
\end{table}

\begin{figure}[ht]
    \centering
    \includegraphics[width=0.9\linewidth]{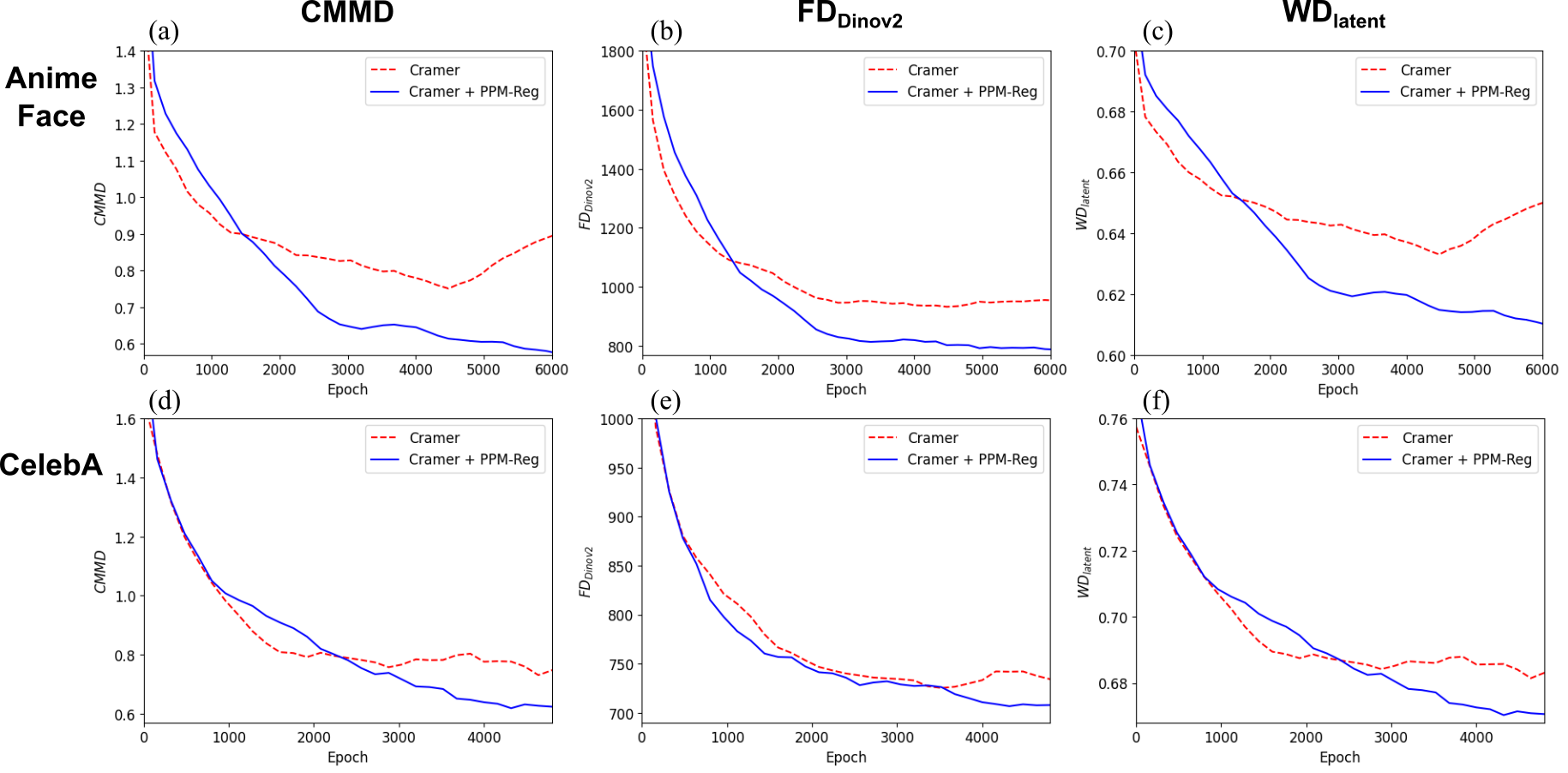}
 	\caption{CMMD~(a,d), FD\textsubscript{Dinov2}~(b,e) and WD\textsubscript{latent}~(c,f) versus training epochs for the AnimeFace (a-c) and CelebA (d-f) dataset. 10K samples are randomly generated to compute distances; moving averages with a window of 5 are used to smooth the values. Distances recorded every 160 epochs.}
 	\label{showcase1}
\end{figure}

\paragraph*{Results.} 
\Cref{showcase1} tracks these three metrics during training. \Cref{showcase1}~(a,b,d,e) shows that using PPM-Reg improves image generation quality for both AnimeFace and CelebA. The Wasserstein distance can better detect geometric information in embedding space. Tracking WD\textsubscript{latent} in \Cref{showcase1}~(c,f) shows that adding PPM-Reg provides more information and helps discover geometric structures in the latent space in an unsupervised way. 
This reinforces work that shows that persistence-based methods are able to effectively measure image generation quality~\citep{zhou2021evaluating,khrulkov_geometry_2018,barannikov_manifold_2021,charlier_phom-gem_2019}.
As training progresses, improved Cramer loss does not always lead to improved evaluation metrics (\Cref{showcase1}~(a,c)). Our reported results use CMMD as an early stopping criterion which is prohibitively expensive to compute in practice. In contrast, the evaluation metrics consistently decrease with respect to training time, and this implies that may not need to compute additional metrics for early stopping.
In~\Cref{apx:further_uig}, we consider larger ($64 \times 64$) image generation experiments with the CelebA and LSUN Kitchen datasets, and find similarly improved results, demonstrating the efficacy of PPM-Reg in larger-scale experiments.

\subsection{Semi-Supervised Learning} \label{ssec:SSL}
Semi-supervised learning (SSL) methods use unlabeled data alongside a small amount of labeled data to train a classification network~\citep{yang2022survey}.
SSL often assumes that classification problems are supported on low-dimensional manifolds, which allows a network to learn the classification problem with limited labels~\citep{niyogi2013manifold}. In practice, knowledge of the low-dimensional manifold can be learned by encoding the unlabeled data to latent representations. With few labeled data points, a simple classifier is trained using those latent representations~\citep{wang2021sganrda,decourt2020semi,truong2019semi,das2021sgan}. Here, we demonstrate PPM-Reg can help encode more informative latent representations, and significantly reduce classification error in SSL.     

\subsubsubsection{\textbf{Network Architecture and Implementation Details.}} We use a 
deconvolutional network as $g_\bomega$, which takes a 64 dimension noise vector as input. We use a CNN as $d_\btheta$ and the output of the network is a 64 dimension latent vector. We use an MLP as a classifier parameterized by $\gamma$, termed $c_{\gamma}$. We first learn the latent representations using a Cramer GAN~\citep{bellemare2017cramer} framework with all available data, and compare it against the addition of PPM-Reg.
After training the GAN, the discriminator $d_\btheta$ is frozen and its output is used as the features to train the classifier $c_\gamma$ with the subset of training samples.
For a comparison without latent representations learning, we consider a ``Baseline'', where $d_\btheta$ and $c_\gamma$ are trained together as a classifier (without the generative part).

\subsubsubsection{\textbf{Dataset and Evaluation Metrics.}} We compare the SSL performance with Fashion-MNIST~\citep{xiao2017fashion}, Kuzushiji-MNIST~\citep{clanuwat2018deep} and MNIST. In these experiments, 200 and 400 labels are randomly sampled from the data set. Due to the inherent randomness in sampling few labels, experiments are repeated ten times and the statistics of the best test-set accuracy are reported.        
\begin{table}[th]
   \centering
   \setlength{\tabcolsep}{2pt}
   \small
   \begin{tabular}{cccccccc}
     \toprule
     \multicolumn{1}{c}{}&\multicolumn{2}{c}{Fashion-MNIST}  &\multicolumn{2}{c}{Kuzushiji-MNIST}&\multicolumn{2}{c}{MNIST}              \\
     \cmidrule(r){2-7}
     Number of labels & 200 & 400 & 200 & 400 & 200 & 400 \\
     \midrule
     Baseline & $67.18\pm0.95$  & $71.00\pm0.83$  & $48.40\pm1.79$  &$55.10\pm1.55$ &$80.52\pm1.49$ & $86.39\pm1.14$  \\
     Cramer& $62.70\pm1.25$ & $68.58\pm1.08$&$47.77\pm1.40$ & $56.06\pm1.88$  &$71.10\pm 1.52$&$78.26\pm1.30$ \\
     Cramer + PPM-Reg &  $\textbf{76.84}\pm\textbf{1.23}$ & $\textbf{80.59}\pm\textbf{0.69}$ & $\textbf{75.78}\pm\textbf{1.99}$ & $\textbf{79.33}\pm\textbf{1.69}$ & $\textbf{96.62}\pm\textbf{0.39}$&$\textbf{97.33}\pm\textbf{0.21}$  \\
     \bottomrule
   \end{tabular}
      \caption{Test-set classification accuracy ($\%$) on Fashion-MNIST, Kuzushiji-MNIST and MNIST.  with 200 and 400 labeled examples. The average and the error bar are computed over 10 runs.}
   \label{CA1}
\end{table}

\subsubsubsection{\textbf{Result.}} \Cref{CA1} shows the test classification accuracy, where we use only 0.33\% (200) and 0.66\% (400) of the total number of labels.
Compared with Baseline, only using Cramer does not significantly improve the classification accuracy in SSL.
Note that while the Cramer GAN (without PPM-Reg) has reasonable generative ability, shown in ~\Cref{GenI2}, this does not imply strong discriminator performance in SSL. 
Remarkably, using PPM-Reg significantly improves the classification accuracy. For example, compared with the Baseline, Kuzushiji-MNIST has gain 27.38\% improvement with 200 labels. Notably, with the latent representations learned with PPM-Reg, we can get a good accuracy using only 0.66\% of the labels. 
This section demonstrates that discovering topological structures in latent space is not only useful in generative tasks, but can be leveraged to massively improve classification accuracy when very few labels are available.
In~\Cref{apx:further_ssl}, we observe similar performance gains in additional experiments on the SVHN dataset.

\section{Conclusion}
In this article, we propose a novel method for stable and scalable topological regularization based on the subsampling principle of PPMs, opening up the possibility of detecting topological information in larger-scale machine learning problems. We introduced a theoretical framework for using kernel methods and MMD metrics for PPMs, and demonstrated the efficacy of this methodology in a variety of experimental settings. This work suggests several directions for future study. From the theoretical and computational perspective, can we develop parallelizable approximate computations in more general settings? From an applied perspective, how can we leverage approximate topological summaries in further machine learning tasks such as classification or regression?

\subsubsection*{Acknowledgments}
This work was supported by the Hong Kong Innovation and Technology Commission (InnoHK Project CIMDA).

\bibliographystyle{iclr2025_conference}  
\bibliography{topological_gan,topological_gan_darrick}

\clearpage
\appendix

\section{Background on Kernels and MMD} \label{apx:background_kernel}
In this section, we provide a brief overview of kernel methods, leading towards the maximum mean discrepancy. For further background, we refer the reader to~\citep{muandet_kernel_2017}.

\paragraph*{Kernels and Feature Maps}
Suppose $\cX$ is a topological space on which we wish to study either functions $f : \cX \to \R$ or measures $\mu \in \cP(\cX)$. A kcommon way to consider such objects is by using a \emph{feature map}
\begin{align}
    \Phi: \cX \to \cH
\end{align}
into some Hilbert space $\cH$. Heuristically, we can consider
\begin{itemize}
    \item \textbf{functions on $\cX$} via linear functionals $\langle\ell, \Phi(\cdot) \rangle_\cH : \cX \to \R$ where $\ell \in \cH$, and
    \item \textbf{measures on $\cX$} by considering the \emph{kernel mean embedding} (which we also denote by $\Phi$), defined by
    \begin{align} \label{eq:apx_kme}
        \Phi : \cP(\cX) \to \cH, \quad \Phi(\mu) = \int_\cX \Phi(x) d\mu(x).
    \end{align}
\end{itemize}

Given this feature map, we can define a positive-definite \emph{kernel} $k: \cX \times \cX \to \R$ defined by 
\begin{align}
    k(x,y) \coloneqq \langle \Phi(x), \Phi(y) \rangle_\cH.
\end{align}

\paragraph*{Reproducing Kernel Hilbert Spaces.}
In fact, we can also go in the other direction and start with a continuous positive definite kernel $k: \cX \times \cX \to \R$, and obtain a feature map from $\cX$ into a Hilbert space of functions. In particular, we define $\cH$ to be the completion of the linear span of functions $\{k(x, \cdot) : \cX \to \R \, : \, x \in \cX\}$, equipped with the inner product
\begin{align}
    \langle k(x,\cdot), k(y, \cdot) \rangle \coloneqq k(x,y).
\end{align}
By the Moore-Aronszajn theorem~\citep{aronszajn_theory_1950}, $\cH$ is a Hilbert space with the \emph{reproducing kernel} property: for any $f \in \cH$ and $x \in \cX$, we have
\begin{align}
    \langle f, k(x, \cdot) \rangle = f(x).
\end{align}
Thus, $\cH$ is a \emph{reproducing kernel Hilbert space (RKHS)}. Note that this is a Hilbert space of functions, $\cH \subset C(\cX, \R)$. Then, we can define a feature map $\Phi: \cX \to \cH$ by 
\begin{align}
    \Phi(x) \coloneqq k(x,\cdot).
\end{align}

\paragraph*{Universality and Characteristicness}
We wish to consider feature maps (or kernels) which satisfy additional properties such that they can approximate functions and characterize measures. Let $\cF \subset C(\cX, \R)$ be a topological vector space, and suppose $\cM$ is a space of measures on $\cX$. A feature map $\Phi: \cX \to \cH$ (with associated kernel $k: \cX \times \cX \to \R$), where $\cH \subset \cF$ is~\citep{simon-gabriel_kernel_2018}
\begin{itemize}
    \item \textbf{universal} with respect to $\cF$ if $\cH$ is dense in $\cF$ (we can approximate functions in $\cF$ using functions in $\cH$); and
    \item \textbf{characteristic} with respect to $\cM$ if the kernel mean embedding in~\Cref{eq:apx_kme} is injective. 
\end{itemize}

\paragraph*{Maximum Mean Discrepancy}
Given a feature map $\Phi: \cX \to \cH$ (with kernel $k$) characteristic to the space of probability measures $\cP(\cX)$, we can use the Hilbert space norm to define a metric on this space of measures. In fact, this is equivalent to the notion of \emph{maximum mean discrepancy (MMD)} from statistics. In particular, given a function class $\cF \subset C(\cX, \R)$, we define the MMD with respect to $\cF$ by
\begin{align}
    \MMD_\cF(\mu, \nu) \coloneqq \sup_{f \in \cF} \left( \E_{x \sim \mu}[ f(x)] - \E_{y \sim \nu}[f(y)] \right).
\end{align}
Now, by~\citep[Lemma 4]{gretton_kernel_2012}, if we choose $\cF$ to be the unit ball of the RKHS $\cH$ (with respect to a characteristic kernel $k$), the MMD with respect to $\cF$ is exactly the Hilbert space norm,
\begin{align}
    \MMD_\cF(\mu, \nu) = \left\| \Phi(\mu) - \Phi(\nu) \right\|_{\cH} \eqqcolon \MMD_k(\mu,\nu),
\end{align}
where the right hand side is how we define $\MMD_k$ in~\Cref{eq:MMD_def}.

\section{Characteristic Kernels on $\Omega$.} \label{apx:kernels_on_omega}

In this appendix, we provide a detailed discussion of the proof of~\Cref{thm:main_characteristic}.

We begin by characterizing some of the elements in the RKHS $\cH_\Omega$.

\begin{lemma} \label{lem:H_Omega_elements}
    The RKHS $\cH_\Omega$ satisfies
    \begin{align}
        \cH_\Omega \supset \{ g: \Omega \to \R \, : \, g(b, \ell) = \ell \cdot f(b, \ell), \, f \in \cH\}.
    \end{align}
\end{lemma}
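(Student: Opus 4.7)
The plan is to invoke the standard characterization of the RKHS induced by a Hilbert-valued feature map and specialize it to $\Phi_\Omega$. The general fact I will use is: given any feature map $\Psi: Y \to \cK$ into a Hilbert space with kernel $\kappa(y_1, y_2) = \langle \Psi(y_1), \Psi(y_2)\rangle_\cK$, the RKHS of $\kappa$ equals the image of the linear pullback map $V: \cK \to \R^Y$ defined by $V(h)(y) = \langle h, \Psi(y)\rangle_\cK$, equipped with the Hilbert structure inherited from $\cK/\ker V$. This is classical; a convenient reference is Steinwart--Christmann (Theorem 4.21) or Berlinet--Thomas-Agnan.

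Specializing to $\Psi = \Phi_\Omega : \Omega \to \cH$, I would identify $\cH_\Omega$ with $\mathrm{im}(V) \subset \R^\Omega$, where $V(h)(z) = \langle h, \Phi_\Omega(z)\rangle_\cH$. Since $\Phi_\Omega(b,\ell) = \ell \cdot k((b,\ell), \cdot) \in \cH$ for $(b,\ell) \neq *$ and $\Phi_\Omega(*) = 0$, the reproducing property of $k$ on $[0,T]^2$ yields
\begin{equation*}
V(h)(b,\ell) \;=\; \ell \cdot \langle h, k((b,\ell), \cdot)\rangle_\cH \;=\; \ell \cdot h(b,\ell),
\end{equation*}
with $V(h)(*) = 0$. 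Therefore $\cH_\Omega = \{g : \Omega \to \R \,:\, g(b,\ell) = \ell \cdot h(b,\ell),\, h \in \cH\}$, and choosing $h = f$ for each $f \in \cH$ gives precisely the containment asserted in the lemma. In fact this yields equality, but the lemma only requires the inclusion.

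The main point needing a small amount of care is matching the paper's description of $\cH_\Omega$ (as the RKHS of $k_\Omega$, endowed with an embedding $\cH_\Omega \hookrightarrow \cH$) with the general feature-map description of $\cH_\Omega$ as $\mathrm{im}(V)$. Since the paper defines $\Phi_\Omega$ to take values in $\cH$ and reads $k_\Omega$ directly off the inner product on $\cH$, this identification is canonical and automatic, so no approximation or completion argument beyond what is built into the general RKHS theorem is needed. The entire argument is essentially a one-line application of the reproducing property of $k$; the hard part is purely bookkeeping.
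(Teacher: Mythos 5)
Your proof is correct, and it takes a genuinely different (and in fact tighter) route than the paper's. The paper argues via Moore--Aronszajn: it represents an element of $\cH$ as a series $\sum_i c_i \ell_i k((b_i,\ell_i),\cdot)$ with absolutely convergent coefficient series, multiplies by $\ell$ to obtain the corresponding series in the sections $k_\Omega((b_i,\ell_i),\cdot)$, and checks that the bound $\ell_i \leq T$ preserves convergence of the coefficient series; this produces the inclusion by explicit series bookkeeping. You instead invoke the feature-map characterization of the RKHS (Steinwart--Christmann Thm.~4.21): since $k_\Omega$ is defined as $\langle \Phi_\Omega(z_1), \Phi_\Omega(z_2)\rangle_{\cH}$ for the explicit map $\Phi_\Omega(z) = \ell\, k(z,\cdot)$, the RKHS of $k_\Omega$ is exactly the image of $h \mapsto \langle h, \Phi_\Omega(\cdot)\rangle_{\cH} = \ell\, h(\cdot)$, and the reproducing property of $k$ finishes the computation in one line. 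Your approach buys two things: it sidesteps the delicate point of whether every $f \in \cH$ actually admits an absolutely convergent kernel-section series (the completion in Moore--Aronszajn contains limits of Cauchy sequences of finite sums, not only such series, so the paper's argument as written only directly covers a dense subset), and it delivers the equality $\cH_\Omega = \{\ell \cdot f : f \in \cH\}$ rather than just the stated containment. The paper's approach is more elementary in that it uses only the Moore--Aronszajn construction already quoted in the appendix, at the cost of this extra care.
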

\begin{proof}
    By Moore-Aronszajn~\citep{aronszajn_theory_1950}, an element of $g \in \cH_\Omega$ is defined by a convergent series 
    \begin{align}
        g(b, \ell) = \sum_{i=1}^\infty c_i k_\Omega((b_i, \ell_i), (b,\ell)) = \ell \sum_{i=1}^\infty c_i \ell_i k((b_i, \ell_i), (b,\ell)) = \ell \cdot f(b,\ell),
    \end{align}
    where $f(b, \ell) = \sum_{i=1}^\infty c_i \ell_i k((b_i, \ell_i), (b,\ell))$. Note that if the coefficient series for $f$ given by $\sum_{i=1}^\infty |c_i| \ell_i k((b_i, \ell_i), (b_i, \ell_i))$ is convergent, then the coefficient series for $g$ given by 
    \begin{align}
        \sum_{i=1}^\infty |c_i| k_\Omega((b_i, \ell_i), (b_i, \ell_i)) = \sum_{i=1}^\infty |c_i| \ell_i^2 k((b_i, \ell_i), (b_i, \ell_i)) \leq T \sum_{i=1}^\infty |c_i| \ell_i k((b_i, \ell_i), (b_i, \ell_i))
    \end{align}
    is also convergent, since $\ell_i \leq T$.
\end{proof}

\paragraph*{Universality and Characteristicness.}
Our first main result concerns the transfer of universal and characteristic properties from $k$ to $k_\Omega$. First, we define the space of \emph{linear-growth continuous functions on $\Omega_T$} by
\begin{align}
    C_{\lin}(\Omega_T) \coloneqq \{\ell\cdot f(b,\ell) \in C(\Omega_T) \, : \, f(b, \ell) \in C([0,T]^2)\},
\end{align}
where we equip it with the norm defined on $g = \ell \cdot f$ by
\begin{align} \label{eq:lin_norm}
    \|g\|_{\lin} \coloneqq |f|_\infty.
\end{align}
Note that for all $g \in C_{\lin}(\Omega)$ have the property that $g(*) = 0$, and $(C_{\lin}(\Omega_T), \|\cdot\|_{\lin})$ and $(C([0,T]^2), |\cdot|_\infty)$ are isometric Banach spaces. 

\begin{theorem}
    Let $k: [0,T]^2 \times [0,T]^2 \to \R$ be a kernel on $[0,T]^2$ universal to $C([0,T]^2)$. Then, $k_\Omega: \Omega \times \Omega \to \R$ is universal with respect to $C_\lin(\Omega_T)$.
\end{theorem}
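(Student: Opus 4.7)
The plan is to reduce the claim to the universality of $k$ on $[0,T]^2$ via the linear isometric isomorphism $\Psi: C([0,T]^2) \to C_{\lin}(\Omega_T)$ defined by $\Psi(f)(b, \ell) = \ell \cdot f(b, \ell)$; this is isometric by the very definition of $\|\cdot\|_{\lin}$ in~\Cref{eq:lin_norm}, and~\Cref{lem:H_Omega_elements} already establishes the key bridge $\Psi(\cH) \subset \cH_\Omega$ from $\cH$-approximations to $\cH_\Omega$-approximations.

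With these ingredients the density argument is immediate. Given any target $g \in C_{\lin}(\Omega_T)$, write $g = \Psi(f)$ with $f \in C([0,T]^2)$; by universality of $k$, pick $f_n \in \cH$ with $|f_n - f|_\infty \to 0$. Set $g_n = \Psi(f_n)$, which lies in $\cH_\Omega$ by~\Cref{lem:H_Omega_elements}. Then the isometry of $\Psi$ gives
\begin{align*}
    \|g_n - g\|_{\lin} = |f_n - f|_\infty \to 0,
\end{align*}
establishing that $\cH_\Omega$ is dense in $C_{\lin}(\Omega_T)$.

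The one technical point worth flagging --- needed to make ``dense in $C_{\lin}(\Omega_T)$'' well-posed --- is the inclusion $\cH_\Omega \subset C_{\lin}(\Omega_T)$. I would verify this by noting that $\Psi$ actually lifts to an RKHS isometry onto $\cH_\Omega$: on a finite sum, a direct computation using $k_\Omega(z_1, z_2) = \ell_1 \ell_2 k(z_1, z_2)$ gives
\begin{align*}
    \Bigl\|\sum_i c_i\, k_\Omega((b_i, \ell_i), \cdot)\Bigr\|_{\cH_\Omega}^2 = \sum_{i,j} c_i c_j \ell_i \ell_j\, k((b_i, \ell_i), (b_j, \ell_j)) = \Bigl\|\sum_i c_i \ell_i\, k((b_i, \ell_i), \cdot)\Bigr\|_{\cH}^2,
\end{align*}
together with the identity $\sum_i c_i k_\Omega((b_i, \ell_i), \cdot) = \Psi\bigl(\sum_i c_i \ell_i\, k((b_i, \ell_i), \cdot)\bigr)$; passing to closures then yields $\cH_\Omega = \Psi(\cH) \subset C_{\lin}(\Omega_T)$.

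I do not anticipate a serious obstacle here: universality of $k$ and~\Cref{lem:H_Omega_elements} do the real work, and the remaining tasks are short manipulations of the kernel identity $k_\Omega(z_1, z_2) = \ell_1 \ell_2 k(z_1, z_2)$ to align the function-space and RKHS levels of the isomorphism $\Psi$.
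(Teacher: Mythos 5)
Your proof is correct and follows essentially the same route as the paper: approximate $f$ by $f_n \in \cH$ using universality of $k$, multiply by $\ell$ to land in $\cH_\Omega$ via \Cref{lem:H_Omega_elements}, and conclude by the isometry built into the definition of $\|\cdot\|_{\lin}$. Your additional verification that $\cH_\Omega \subset C_{\lin}(\Omega_T)$ (via the RKHS-level isometry) is a point the paper leaves implicit, but it does not change the argument.
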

\begin{proof}
    Let $g \in C_{\lin}(\Omega_T)$ and suppose $g = \ell\cdot f$ for some $f \in C([0,T]^2)$. Because $k$ is universal, there exists $f_n \in \cH$ such that $|f_n - f|_\infty \to 0$. Then, by~\Cref{lem:H_Omega_elements}, $g_n = \ell \cdot f_n \in \cH_\Omega$, and furthermore, by the definition of $\|\cdot\|_\lin$ in~\Cref{eq:lin_norm}, we have $\|g_n - g\|_{\lin} \to 0$. Thus, $k_\Omega$ is universal with respect to $C_\lin(\Omega_T)$.
\end{proof}

The main result we wish to obtain is characteristicness with respect to measures on $\Omega$. We begin by applying the duality theorem of~\citep[Theorem 6]{simon-gabriel_kernel_2018} which immediately implies characteristicness with respect to the topological dual of $C_{\lin}(\Omega)$, which we equip with the weak-* topology with respect to $C_{\lin}(\Omega)$.

\begin{corollary} \label{cor:characteristic1}
    Let $k: [0,T]^2 \times [0,T]^2 \to \R$ be a kernel on $[0,T]^2$ universal to $C([0,T]^2)$. Then, $k_\Omega$ is characteristic with respect to $C_{\lin}(\Omega)^*$.
\end{corollary}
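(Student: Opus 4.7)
The plan is to view this corollary as a direct application of the duality result~\citep[Theorem 6]{simon-gabriel_kernel_2018}, which asserts (roughly) that if a bounded continuous kernel has its RKHS embedding continuously into a topological vector space $\cF$ of continuous functions, then the kernel is universal with respect to $\cF$ if and only if its kernel mean embedding is injective on $\cF^*$ equipped with the weak-$*$ topology. The theorem immediately preceding the corollary supplies exactly the universality side of this equivalence for $\cF = C_{\lin}(\Omega_T)$, so the conclusion should follow essentially by citation.

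First I would invoke the preceding theorem to record that $k_\Omega$ is universal with respect to $C_{\lin}(\Omega_T)$. I would then verify the technical prerequisites of the duality theorem. Compactness of $\Omega_T$, continuity of $k$, and the bound $\ell \leq T$ together show that $k_\Omega(z_1, z_2) = \ell_1 \ell_2 k(z_1, z_2)$ is bounded and continuous on $\Omega_T \times \Omega_T$. Next, using~\Cref{lem:H_Omega_elements}, every $g \in \cH_\Omega$ has the form $g(b, \ell) = \ell \cdot f(b, \ell)$ with $f \in \cH \subset C([0,T]^2)$; the reproducing kernel property combined with boundedness of $k$ yields a uniform bound $\|f\|_\infty \leq C \|g\|_{\cH_\Omega}$, with $C$ depending only on $T$ and $\sup k$, that is $\|g\|_{\lin} \leq C \|g\|_{\cH_\Omega}$. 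This gives a continuous inclusion $\cH_\Omega \hookrightarrow C_{\lin}(\Omega_T)$, placing us squarely in the hypotheses of the cited duality theorem.

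Finally, I would apply~\citep[Theorem 6]{simon-gabriel_kernel_2018}: the universality established above is equivalent to injectivity of the kernel mean embedding on $C_{\lin}(\Omega_T)^*$ with the weak-$*$ topology, which is precisely what it means for $k_\Omega$ to be characteristic on $C_{\lin}(\Omega_T)^*$. The main obstacle I anticipate is not conceptual but bookkeeping: namely, making precise how the kernel mean embedding $\Phi$ extends from probability measures, as in~\Cref{eq:apx_kme}, to an injective map on the full dual $C_{\lin}(\Omega_T)^*$ via the pairing $T \mapsto (z \mapsto T(k_\Omega(\cdot, z)))$, and confirming that this is exactly the form of injectivity produced by the cited duality theorem. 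Once that correspondence is pinned down, the corollary follows immediately; the isometric identification $C_{\lin}(\Omega_T) \cong C([0,T]^2)$ supplied by~\Cref{eq:lin_norm} can also be used as a shortcut, transporting the whole picture back to the standard universal kernel on the square where the duality is most familiar.
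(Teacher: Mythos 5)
Your proposal is correct and takes essentially the same route as the paper: the corollary is obtained by combining the preceding universality theorem for $k_\Omega$ with respect to $C_{\lin}(\Omega_T)$ with the duality theorem of~\citep[Theorem 6]{simon-gabriel_kernel_2018}. The paper simply cites that duality result and declares the conclusion immediate, whereas you additionally spell out the verification of its hypotheses (boundedness and continuity of $k_\Omega$, continuous inclusion $\cH_\Omega \hookrightarrow C_{\lin}(\Omega_T)$), which is a harmless elaboration of the same argument.
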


Our next task is to show that our desired measures are contained in this dual space.

\begin{theorem} \label{thm:lin_prob_in_dual}
    Let $q: [0,T]^2 \to \Omega$ denote the quotient map, and let $s: [0,T]^2 \to [0, \infty)$ be defined by $s(b,\ell) = \ell$ for $\ell > 0$ and $s(b, 0) = 0$. Define
    \begin{align} \label{eq:Mlin_def}
        \cM_{\lin}(\Omega) \coloneqq \left\{q_*\mu \in \cM(\Omega) \, : \, \mu \in \cM([0,T]^2), \, \mu(\{\ell=0\}) = 0, \, \left|\int_{[0,T]^2} \ell d\mu\right| < \infty\right\},
    \end{align}
    then $\cM_{\lin}(\Omega) \subset C_{\lin}(\Omega)^*$.
\end{theorem}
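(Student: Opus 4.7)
My plan is to show that each element $\eta = q_*\mu \in \cM_\lin(\Omega)$ defines a bounded linear functional on $C_\lin(\Omega)$ via the canonical integration pairing $g \mapsto \int_\Omega g \, d\eta$, so that the inclusion $\cM_\lin(\Omega) \hookrightarrow C_\lin(\Omega)^*$ is realized by this standard identification. Linearity of the pairing in $g$ is automatic from linearity of the integral, so the content of the statement reduces to proving continuity with respect to the norm $\|\cdot\|_\lin$ defined in~\Cref{eq:lin_norm}.

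The key step is to rewrite the pairing on $\Omega$ as an integral on $[0,T]^2$ via the pushforward change-of-variables formula. Fix $g \in C_\lin(\Omega)$ with $g(b,\ell) = \ell \cdot f(b,\ell)$ for some $f \in C([0,T]^2)$, and $\eta = q_*\mu$. Then
\begin{align*}
    \int_\Omega g \, d(q_*\mu) \; = \; \int_{[0,T]^2} (g \circ q)(b,\ell) \, d\mu(b,\ell) \; = \; \int_{[0,T]^2} \ell \cdot f(b,\ell) \, d\mu(b,\ell),
\end{align*}
where I use that $(g \circ q)(b,\ell) = \ell f(b,\ell)$ whenever $\ell > 0$ and that $\mu(\{\ell = 0\}) = 0$ by hypothesis, which removes any ambiguity on the collapsed set where $q(b,0) = *$ and $g(*) = 0$. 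Estimating against the total variation $|\mu|$ then yields
\begin{align*}
    \Bigl| \int_\Omega g \, d\eta \Bigr| \; \leq \; |f|_\infty \int_{[0,T]^2} \ell \, d|\mu|(b,\ell) \; = \; C_\mu \cdot \|g\|_\lin,
\end{align*}
where $C_\mu \coloneqq \int \ell \, d|\mu|$ and the last equality invokes~\Cref{eq:lin_norm}. Provided $C_\mu < \infty$, this bound establishes $\eta \in C_\lin(\Omega)^*$ with operator norm at most $C_\mu$.

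The main obstacle I anticipate is the bookkeeping for signed measures: to make the uniform bound above valid, the finiteness condition $|\int \ell \, d\mu| < \infty$ in~\Cref{eq:Mlin_def} should be read as the slightly stronger statement $\int \ell \, d|\mu| < \infty$, i.e.\ finite total (absolute) persistence. Under the standard convention that $\cM([0,T]^2)$ consists of finite signed Radon measures, this is automatic since $\ell \leq T$ gives $C_\mu \leq T \cdot |\mu|([0,T]^2) < \infty$; otherwise one simply restricts to this class, which is the relevant one for persistence measures. Once this reading is fixed, the argument reduces to the one-line pushforward identity and Hölder-style estimate above, and the only remaining routine check is that $g \circ q$ is Borel measurable on $[0,T]^2$, which follows because the quotient map $q$ is continuous.
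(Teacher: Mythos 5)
Your proof follows essentially the same route as the paper's: rewrite the pairing as an integral over $[0,T]^2$ via the pushforward identity (using $\mu(\{\ell=0\})=0$ to handle the collapsed point) and then bound it by $\|g\|_{\lin}$ times a first moment of $\mu$. Your replacement of $\left|\int \ell\, d\mu\right|$ by $\int \ell\, d|\mu|$ in the final estimate is in fact a necessary sharpening for signed measures, and your remark that this quantity is automatically finite for finite measures because $\ell \le T$ is also correct.
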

\begin{proof}
    Let $q_*\mu \in \cM_{\lin}(\Omega)$. Then, for $g = \ell \cdot f \in C_{\lin}(\Omega)$, we have
    \begin{align}
        \left|\int_{\Omega} \ell \cdot f(b, \ell) q_* d\nu\right| = \left|\int_{[0,T]^2} \ell \cdot f(b, \ell) d\mu\right|   \leq \|g\|_{\lin} \left|\int_{[0,T]^2} \ell d\mu\right|,
    \end{align}
    where we use $\mu(\{\ell=0\}) = 0$ in the first equality. Thus, $q_*\mu$ is a bounded (hence continuous) linear functional. 
\end{proof}

As we wish to use this kernel to study PPMs, we are interested in probability measures on $\Omega$. However, the elements in the dual only contain measures which are trivial on $* \in \Omega$ (whereas PPMs may have nontrivial mass on $*$), and thus we first define a different representation of probability measures. In particular, we define
\begin{align} \label{eq:Plin_def}
    \cP_{\lin}(\Omega) \coloneqq \left\{ \nu \in \cM_{\lin}(\Omega) \, : \, |\nu| \leq 1\right\} = \left\{ \nu \in \cM(\Omega) \, : \, \nu(\{*\}) = 0, \, |\nu| \leq 1 \right\} \subset \cM_{\lin}(\Omega).
\end{align}
Note that the equality holds since the moment condition in~\Cref{eq:Mlin_def} is immediately satisfied since the measures are finite with bounded support. With the following two lemmas, we show that this coincides with $\cP(\Omega)$.

\begin{lemma} \label{lem:Clin_dense_in_C}
    The space $C_\lin(\Omega)$ is dense in $C_0(\Omega)$ equipped with the uniform topology.
\end{lemma}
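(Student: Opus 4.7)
The plan is to lift the problem along the quotient map $q: [0,T]^2 \to \Omega$ and use a cutoff-function argument to produce a sequence in $C_\lin(\Omega)$ converging uniformly to a given element of $C_0(\Omega)$. Given $g \in C_0(\Omega)$, set $\tilde g = g \circ q \in C([0,T]^2)$. Since $g$ vanishes at the collapsed point $* = \{\ell=0\}$, we have $\tilde g(b,0) = 0$ for all $b \in [0,T]$, and $\|\tilde g\|_\infty = \|g\|_\infty$. So the task reduces to approximating any continuous function $\tilde g$ on $[0,T]^2$ that vanishes on $\{\ell=0\}$ by functions of the form $(b,\ell) \mapsto \ell \cdot f(b,\ell)$ with $f \in C([0,T]^2)$, in the uniform norm; such approximants descend to $C_\lin(\Omega)$ by definition.

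The obstruction to simply dividing $\tilde g$ by $\ell$ is that $\tilde g(b,\ell)/\ell$ need not extend continuously to $\ell = 0$ (e.g.\ if $\tilde g$ vanishes only to fractional order). To bypass this, I will use a cutoff. Fix $\epsilon > 0$. By uniform continuity of $\tilde g$ on the compact square and the vanishing $\tilde g(b,0)=0$, there exists $\delta > 0$ such that $|\tilde g(b,\ell)| < \epsilon$ whenever $\ell \leq \delta$. Choose a continuous ramp $\phi_\delta : [0,T] \to [0,1]$ with $\phi_\delta(\ell) = 0$ for $\ell \leq \delta/2$ and $\phi_\delta(\ell) = 1$ for $\ell \geq \delta$, and define $f_\delta(b,\ell) = \phi_\delta(\ell)\,\tilde g(b,\ell)/\ell$ for $\ell > 0$ and $f_\delta(b,0) = 0$. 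Since $\phi_\delta$ is supported in $\{\ell \geq \delta/2\}$, dividing by $\ell$ there is harmless and $f_\delta$ extends continuously across $\ell = 0$ by zero, so $f_\delta \in C([0,T]^2)$. The product $\ell \cdot f_\delta(b,\ell) = \phi_\delta(\ell)\,\tilde g(b,\ell)$ is continuous, equals $0$ on $\{\ell = 0\}$, and therefore factors through $q$ as some $g_\delta \in C_\lin(\Omega)$.

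To close the argument, observe
\begin{align}
\|g - g_\delta\|_\infty = \|\tilde g - \phi_\delta \tilde g\|_\infty = \sup_{(b,\ell) \in [0,T]^2} (1 - \phi_\delta(\ell))\,|\tilde g(b,\ell)| \leq \sup_{\ell \leq \delta} |\tilde g(b,\ell)| < \epsilon,
\end{align}
where we use $1 - \phi_\delta(\ell) = 0$ for $\ell \geq \delta$. Letting $\epsilon \downarrow 0$ yields the desired approximation. There is no substantive obstacle here; the only item requiring care is verifying that $f_\delta$ is genuinely continuous at $\ell = 0$, which is handled precisely by the support condition on $\phi_\delta$.
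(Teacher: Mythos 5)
Your proof is correct and takes essentially the same route as the paper's: both exploit uniform continuity of $g$ together with $g(*)=0$ to find a threshold $\delta$ below which $|g|<\epsilon$, then modify $g$ near $\{\ell=0\}$ so that division by $\ell$ produces a genuine element $f\in C([0,T]^2)$, yielding a uniform approximant $\ell\cdot f\in C_\lin(\Omega)$. The only difference is cosmetic — the paper freezes the quotient $g/\ell$ at its value on the threshold line (a linear-in-$\ell$ extension), whereas you multiply by a continuous ramp $\phi_\delta$ — and both give the same $O(\epsilon)$ error estimate.
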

\begin{proof}
    Let $g \in C_0(\Omega)$, and since $g$ is uniformly continuous (since $\Omega$ is compact) and $g(*) = 0$, for every $\epsilon > 0$, there exists some $\ell_\epsilon$ such that $g(b, \ell) < \epsilon$ whenever $\ell < \ell_\epsilon$. Now, define $f_n \in C([0,T]^2)$ by 
    \begin{align}
        f_n(b, \ell) = \frac{g(b, \ell)}{\ell} \quad \text{for} \quad \ell \geq \ell_{1/n}\quad \text{and} \quad f_n(b, \ell) = \frac{g(b, \ell_{1/n})}{\ell_{1/n}}\quad \text{for} \quad \ell < \ell_{1/n}. 
    \end{align}
    Then, define $g_n(b, \ell) = \ell \cdot f(b, \ell)$, where $g_n(b, \ell) = g(b, \ell)$ whenever $\ell \geq \ell_{1/n}$. When $\ell < \ell_{1/n}$, we have
    \begin{align}
        |g_n(b, \ell) - g(b, \ell)| = |g_n(b, \ell_{1/n} - g(b, \ell)| \leq \frac{2}{n},
    \end{align}
    so $g_n$ converges uniformly to $g$. 
\end{proof}

\begin{lemma} \label{lem:homeo}
    There exists a homeomorphism 
    \begin{align}
        \psi: \cP(\Omega) \to \cP_{\lin}(\Omega)
    \end{align}
    where $\cP(\Omega)$ is equipped with the weak-* topology with respect to $C(\Omega)$ and $\cP_{\lin}(\Omega)$ is equipped with the weak-* topology with respect to $C_{\lin}(\Omega)$.
\end{lemma}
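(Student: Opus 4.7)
The plan is to write down the explicit bijection suggested by the definitions and verify it is a homeomorphism using the previous lemma on density.

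First I would define the map and its inverse. Set
\[
\psi(\nu)(A) \coloneqq \nu(A \setminus \{*\}), \qquad \psi^{-1}(\mu) \coloneqq \mu + \bigl(1 - |\mu|\bigr)\delta_*.
\]
For $\nu \in \cP(\Omega)$, the measure $\psi(\nu)$ is a sub-probability measure with $\psi(\nu)(\{*\}) = 0$, so $\psi(\nu) \in \cP_\lin(\Omega)$ (the moment condition in~\Cref{eq:Mlin_def} is automatic since $\Omega$ is bounded). For $\mu \in \cP_\lin(\Omega)$, we have $|\mu| \leq 1$, so $\psi^{-1}(\mu)$ is a genuine probability measure. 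Checking that $\psi \circ \psi^{-1}$ and $\psi^{-1} \circ \psi$ are identities is immediate, so $\psi$ is a bijection.

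Next I would verify continuity of $\psi$, which is the easier direction. For any $g \in C_\lin(\Omega)$ we have $g(*) = 0$ by construction, so $\int_\Omega g\, d\psi(\nu) = \int_\Omega g\, d\nu$. If $\nu_n \to \nu$ in the weak-* topology with respect to $C(\Omega)$, then since $C_\lin(\Omega) \subset C(\Omega)$, the integrals $\int g\, d\psi(\nu_n) = \int g\, d\nu_n$ converge to $\int g\, d\nu = \int g\, d\psi(\nu)$ for every $g \in C_\lin(\Omega)$, giving continuity of $\psi$.

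The main work is continuity of $\psi^{-1}$, where the input topology is weaker (tests against $C_\lin(\Omega)$, not all of $C(\Omega)$). Given $f \in C(\Omega)$, I would split $f = f_0 + f(*)\cdot \mathbf 1$, where $f_0 \coloneqq f - f(*)\cdot \mathbf 1 \in C_0(\Omega)$ (using the convention that $C_0(\Omega)$ denotes continuous functions vanishing at $*$, consistent with~\Cref{lem:Clin_dense_in_C}). Then
\[
\int_\Omega f\, d\psi^{-1}(\mu_n) = \int_\Omega f_0\, d\mu_n + f(*),
\]
since $\psi^{-1}(\mu_n)$ is a probability measure and $f_0(*) = 0$. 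The constant term is already in the right form, so it suffices to handle the $f_0$ piece. Suppose $\mu_n \to \mu$ in $\cP_\lin(\Omega)$ weak-* against $C_\lin(\Omega)$. Given $\varepsilon > 0$, by~\Cref{lem:Clin_dense_in_C} pick $g \in C_\lin(\Omega)$ with $|f_0 - g|_\infty < \varepsilon$. Then
\[
\Bigl| \int f_0\, d\mu_n - \int f_0\, d\mu \Bigr|
\leq |f_0 - g|_\infty \bigl(|\mu_n| + |\mu|\bigr) + \Bigl| \int g\, d\mu_n - \int g\, d\mu \Bigr|
\leq 2\varepsilon + o(1),
\]
using $|\mu_n|, |\mu| \leq 1$. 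Letting $n \to \infty$ then $\varepsilon \to 0$ gives $\int f_0\, d\mu_n \to \int f_0\, d\mu$, hence $\int f\, d\psi^{-1}(\mu_n) \to \int f\, d\psi^{-1}(\mu)$, and $\psi^{-1}$ is continuous.

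The only subtle point, and the reason the previous density lemma is needed, is that convergence of $\mu_n$ only against $C_\lin(\Omega)$ functions does not a priori control integrals against arbitrary $f \in C(\Omega)$; the decomposition $f = f_0 + f(*)\cdot\mathbf 1$ together with $\overline{C_\lin(\Omega)} \supset C_0(\Omega)$ in the sup norm is exactly what bridges this gap, and the uniform bound $|\mu_n| \leq 1$ turns uniform approximation of the test function into uniform control on integrals.
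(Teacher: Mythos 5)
Your proposal is correct and follows essentially the same route as the paper: the same explicit bijection $\psi(\mu)=\mu-\mu(\{*\})\delta_*$ with inverse $\mu\mapsto\mu+(1-|\mu|)\delta_*$, the easy direction via $g(*)=0$ for $g\in C_{\lin}(\Omega)$, and the harder direction via the decomposition $f=f_0+f(*)\mathbf{1}$ together with the density of $C_{\lin}(\Omega)$ in $C_0(\Omega)$. Your version is in fact slightly more careful than the paper's, since you make explicit the uniform bound $|\mu_n|\leq 1$ needed to upgrade convergence on the dense subspace $C_{\lin}(\Omega)$ to convergence on all of $C_0(\Omega)$.
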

\begin{proof}
    We define $\psi$ and its inverse by
    \begin{align} \label{eq:psi_homeo_def}
        \psi(\mu) \coloneqq \mu - \mu(*)\delta_* \quad \text{and} \quad \psi^{-1}(\nu) = \nu + (1- \nu(\Omega)) \delta_*,
    \end{align}
    where $\delta_*$ denotes the Dirac measure on $* \in \Omega$. By definition of $\cP_{\lin}(\Omega)$ in~\Cref{eq:Plin_def}, this map is a bijection. It remains to show that $\mu_n \to \mu$ in $\cP(\Omega)$ if and only if $\psi(\mu_n) \to \psi(\mu)$ in $\cP_{\lin}(\Omega)$.

    Note that for any $f \in C(\Omega)$ and $\mu \in \cP(\Omega)$, we can decompose the integral as
    \begin{align} \label{eq:integral_decomposition}
        \mu(f) = \int_{\Omega}fd\mu = \int_{\Omega - \{*\}} fd\mu + f(*) \mu(\{*\}).
    \end{align}
    Then, for $f \in C_{\lin}(\Omega)$, we have
    \begin{align}
        \mu(f) = \int_{\Omega - \{*\}} fd\mu = \int_{\Omega - \{*\}} fd\psi(\mu)
    \end{align}
    since $f(*) = 0$ and $\mu = \psi(\mu)$ on $\Omega - \{*\}$. Thus, if $\mu_n \to \mu$ in $\cP(\Omega)$ then $\psi(\mu_n) \to \psi(\mu)$ in $\cP_{\lin}(\Omega)$.

    Next, suppose that $\nu_n \to \nu$ in $\cP_{\lin}(\Omega)$. Now, we note that $\cP_{\lin}(\Omega) \subset \cM(\Omega)$, and thus are also continuous functionals on $C(\Omega)$ with respect to the uniform topology. By~\Cref{lem:Clin_dense_in_C}, we have $\nu_n(f) \to \nu(f)$ for all $f \in C_0(\Omega)$. However, this also implies it holds for all $f \in C(\Omega)$, since we obtain functions in $C(\Omega)$ by adding a constant to a function in $C_0(\Omega)$ and $\nu(\{*\}) = 0$ for all $\nu \in \cP_{\lin}(\Omega)$. This implies that
    \begin{align}
        \int_{\Omega - \{*\}} f d \psi^{-1}(\nu_n) \to \int_{\Omega - \{*\}} f d \psi^{-1}(\nu)
    \end{align}
    since $\nu = \psi^{-1}(\nu)$ on $\Omega - \{*\}$ for all $\nu \in \cP_{\lin}(\Omega)$. Furthermore, this implies that $\nu_n(\Omega) \to \nu(\Omega)$, and thus by definition of $\psi^{-1}$ in~\Cref{eq:psi_homeo_def}, we have $\psi^{-1}(\nu_n) \to \psi^{-1}(\nu)$ in $\cP(\Omega)$. 
\end{proof}

This result allows us to work wtih $\cP(\Omega)$ and $\cP_{\lin}(\Omega)$ interchangeably. In particular, note that $\Phi_\Omega(*) = 0 \in \cH_{\Omega}$. This implies that for any $\mu \in \cP(\Omega)$, we have
\begin{align} \label{eq:phi_on_P_omega}
    \Phi_\Omega(\mu) = \Phi_\Omega(\psi(\mu)). 
\end{align}

\begin{proof}[Proof of~\Cref{thm:main_characteristic}.]
    By~\Cref{cor:characteristic1}, $k_\Omega$ is characteristic with respect to $\cM_{\lin}(\Omega)$, and $\cP_{\lin}(\Omega) \subset \cM_{\lin}(\Omega)$ by~\Cref{thm:lin_prob_in_dual}. Next, by~\Cref{lem:homeo}, we can apply the identity~\Cref{eq:phi_on_P_omega} to conclude that $k_\Omega$ is characteristic with respect to $\cP(\Omega)$. 
\end{proof}

\section{Metrizing Weak Topology} \label{apx:weak_topology}

In this appendix, we provide a detailed discussion of the proof of~\Cref{thm:metrizing_weak_conv}.
\paragraph*{Maximum Mean Discrepancy (MMD).} 
Because $k_\Omega$ is a characteristic kernel on $\Omega$, the \emph{kernel mean embedding}, defined (by an abuse of notation) on $\nu = q_* \mu \in \cM_{\lin}(\Omega)$ by
\begin{align}
    \Phi_\Omega : \cM_{\lin}(\Omega) \to \cH_\Omega, \quad \Phi(\nu) \coloneqq \E_{z \sim \nu} [\Phi_\Omega(z)],
\end{align}
is injective. 
This induces a metric on $\cM_{\lin}(\Omega)$, called the \emph{maximum mean discrepancy (MMD)}, 
\begin{align} 
    \MMD_k(\nu_1, \nu_2) \coloneqq \big\|\Phi(\nu_1) - \Phi(\nu_2)\|_{\cH_{\Omega}}.
\end{align}

Our next result shows that the MMD metrizes the weak-* topology on $\cP_{\lin}(\Omega)$, where we can directly apply~\citep[Theorem 3.2]{sriperumbudur_optimal_2016}. See also~\citep{simon-gabriel_metrizing_2023} for related results. 
\begin{theorem}
    The MMD metric metrizes the weak topology on $\cP(\Omega)$. In other words, given measures $\nu_n, \nu \in \cP(\Omega)$, we have $\MMD_k(\nu_n, \nu) \to 0$ if and only if
    \begin{align} \label{eq:weak*conv}
        \left| \nu_n(g) - \nu(g) \right| \to 0
    \end{align}
    for all $g \in C(\Omega)$. 
\end{theorem}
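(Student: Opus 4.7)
The plan is to establish the two implications separately, taking advantage of the kernel machinery already set up in the preceding appendix: characteristicness of $k_\Omega$, density of $\cH_\Omega$ in $C_{\lin}(\Omega)$ (universality), density of $C_{\lin}(\Omega)$ in $C_0(\Omega)$ (Lemma \ref{lem:Clin_dense_in_C}), and boundedness/continuity of $k_\Omega$ on the compact space $\Omega \times \Omega$. Throughout I would identify $\nu \in \cP(\Omega)$ with $\psi(\nu) \in \cP_{\lin}(\Omega)$ via Lemma \ref{lem:homeo} and equation \eqref{eq:phi_on_P_omega}, since the MMD is invariant under $\psi$ ($\Phi_\Omega(*)=0$) and the homeomorphism transfers the weak-$*$ topologies appropriately.

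For the forward direction ($\MMD_k(\nu_n, \nu) \to 0$ implies weak convergence), I would first use the reproducing kernel inequality $|\nu_n(f) - \nu(f)| = |\langle \Phi(\nu_n) - \Phi(\nu), f\rangle_{\cH_\Omega}| \leq \|f\|_{\cH_\Omega} \MMD_k(\nu_n,\nu)$ to get convergence on the dense subspace $\cH_\Omega \subset C_{\lin}(\Omega)$. Then the standard trick of density plus uniform operator-norm bound extends the convergence: since each $\nu_n$ is a probability measure, it acts as a functional of norm at most $T$ on $(C_{\lin}(\Omega), \|\cdot\|_{\lin})$ because $|\nu_n(\ell f)| \leq |f|_\infty \int \ell \, d\nu_n \leq T \|f\|_\infty$. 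The same density-and-boundedness argument then lifts convergence from $C_{\lin}(\Omega)$ to $C_0(\Omega)$ (in uniform norm, using Lemma \ref{lem:Clin_dense_in_C} and the operator-norm bound $|\nu_n|(\Omega) \leq 1$). Finally, for arbitrary $g \in C(\Omega)$, decompose $g = g(*) + (g - g(*))$ where $g - g(*) \in C_0(\Omega)$ and constants integrate trivially under probability measures.

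For the reverse direction (weak convergence implies $\MMD_k \to 0$), I would expand
\begin{align*}
    \MMD_k^2(\nu_n,\nu) = \int_{\Omega\times\Omega}\! k_\Omega \, d(\nu_n\otimes\nu_n) - 2\!\int_{\Omega\times\Omega}\! k_\Omega \, d(\nu_n\otimes\nu) + \int_{\Omega\times\Omega}\! k_\Omega \, d(\nu\otimes\nu).
\end{align*}
Weak convergence $\nu_n \to \nu$ on the compact metric space $\Omega$ implies weak convergence of the product measures $\nu_n\otimes\nu_n \to \nu\otimes\nu$ and $\nu_n\otimes\nu \to \nu\otimes\nu$. Since $k_\Omega$ is continuous and bounded on $\Omega\times\Omega$, each integral converges to $\int k_\Omega \, d(\nu\otimes\nu)$, and the three terms cancel in the limit. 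Alternatively, one could cite \citep[Theorem 3.2]{sriperumbudur_optimal_2016} directly, whose hypotheses (continuous, bounded, characteristic kernel on a compact metric space) are all met here.

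The main obstacle is the bookkeeping in the forward direction: the three-step ladder $\cH_\Omega \hookrightarrow C_{\lin}(\Omega) \hookrightarrow C_0(\Omega) \hookrightarrow C(\Omega)$ involves switching between the RKHS norm, the linear-growth norm $\|\cdot\|_{\lin}$, and the uniform norm, and at each step one must verify that the dual norms of $\nu_n$ remain uniformly bounded. The subtle point is the first rung, where one relies on the multiplicative weight $\ell$ and compactness of $\Omega$ to convert a $\|\cdot\|_{\lin}$-bound into a $\|\cdot\|_\infty$-bound with the factor of $T$, ensuring the density argument at the $C_{\lin}$ level transfers cleanly to the $C_0$ level.
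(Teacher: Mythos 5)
Your argument is correct, but it takes a different route from the paper: the paper's entire proof is a one-line appeal to \citep[Theorem 3.2]{sriperumbudur_optimal_2016}, checking only that $\Omega$ is a compact Polish space and that $k_\Omega$ is continuous and bounded --- precisely the ``alternative'' you mention in passing at the end of your reverse direction. What you construct instead is a self-contained proof assembled from the machinery already in the appendix: the forward direction via the three-step density ladder $\cH_\Omega \hookrightarrow C_{\lin}(\Omega) \hookrightarrow C_0(\Omega) \hookrightarrow C(\Omega)$ with uniform dual-norm bounds at each rung (the bound $|\nu_n(\ell f)| \leq T\,|f|_\infty$ from $\ell \leq T$ on $\Omega_T$, then total variation $\leq 1$ for the $C_0$ step, then the constant-function decomposition), and the reverse direction by expanding $\MMD_k^2$ into three double integrals of the bounded continuous $k_\Omega$ against weakly convergent product measures. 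Both are sound; the second and third rungs of your ladder essentially reproduce the argument the paper uses inside \Cref{lem:homeo}, so your version makes visible exactly which structural facts ($\ell$-weighting, compactness, characteristicness) the cited black-box theorem is consuming, at the cost of length. The one point to be careful about, which you flag correctly, is that density of $\cH_\Omega$ in $C_{\lin}(\Omega)$ is with respect to $\|\cdot\|_{\lin}$ while the first-rung convergence is controlled by $\|\cdot\|_{\cH_\Omega}$; the standard $3\epsilon$ argument still goes through because the uniform functional bound is taken in the $\|\cdot\|_{\lin}$ dual norm, not the RKHS norm.
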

\begin{proof}
    This follows from~\citep[Theorem 3.2]{sriperumbudur_optimal_2016} as $\Omega$ is a compact Polish space and $k_\Omega$ is a continuous bounded kernel. 
\end{proof}

\begin{corollary} \label{eq:}
    The $p$-Wasserstein metric $W_p$ and the MMD metric $\MMD_k$ induce the same topology on $\cP(\Omega)$. 
\end{corollary}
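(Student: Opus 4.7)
The plan is to show that both $W_p$ and $\MMD_{k}$ metrize the weak-* topology on $\cP(\Omega)$, from which the conclusion is immediate. This is precisely the route sketched in the appendix through Corollary following the unlabeled theorem, and I would follow the same strategy.

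First, I would handle the MMD side. The key observation is that $\Omega = \Omega_T$ is a compact Polish space (it is the quotient of the bounded closed set $\{(b,\ell) \in [0,T]^2 : \ell \geq 0\}$ by collapsing the segment $\{\ell = 0\}$). From the explicit formula $k_\Omega(z_1, z_2) = \ell_1 \ell_2 k(z_1, z_2)$, together with the continuity of $k$ on $[0,T]^2 \times [0,T]^2$ and the collapsing condition ensuring continuity at $*$, the kernel $k_\Omega$ is continuous and bounded. By hypothesis (and Theorem~\ref{thm:main_characteristic}), it is also characteristic with respect to $\cP(\Omega)$. I would then directly invoke a standard metrization result for MMD, e.g.\ \citep[Theorem 3.2]{sriperumbudur_optimal_2016}, which states that a continuous bounded characteristic kernel on a compact Polish space induces an MMD that metrizes the weak-* topology on probability measures.

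Second, I would handle the Wasserstein side. This is a classical fact in optimal transport: on a compact metric space, $W_p$ metrizes the weak-* topology on $\cP$ for every $p \geq 1$. The usual statement (see e.g.\ Villani's \emph{Optimal Transport, Old and New}, Theorem 6.9) says that $W_p$-convergence is equivalent to weak-* convergence together with convergence of $p$-th moments; on a compact space the moment condition is automatic because the integrand is uniformly bounded and the dominated convergence theorem applies. Because $\Omega$ is compact and equipped with the quotient of the Euclidean metric, this result applies directly.

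Combining the two steps, both $W_p$ and $\MMD_k$ metrize the same topology on $\cP(\Omega)$, namely the weak-* topology, so they induce the same topology on $\cP(\Omega)$. The main conceptual obstacle is simply ensuring that the quotient space $\Omega$ fits into the classical frameworks; once compactness and the Polish property of $\Omega$ are verified, and once continuity/boundedness of $k_\Omega$ at the collapsed point $*$ is checked (which follows from the factor $\ell_1 \ell_2$ vanishing as $\ell_i \to 0$), both invoked theorems apply without modification. No delicate analysis beyond these sanity checks is required.
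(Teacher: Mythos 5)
Your proposal is correct and follows essentially the same route as the paper: both arguments show that $\MMD_k$ metrizes the weak-* topology via \citep[Theorem 3.2]{sriperumbudur_optimal_2016} (using that $\Omega$ is a compact Polish space and $k_\Omega$ is continuous and bounded) and that $W_p$ does likewise via \citep[Theorem 6.9]{villani_optimal_2009}. Your additional sanity checks on continuity of $k_\Omega$ at the collapsed point and on the automatic moment convergence under compactness are consistent with, and slightly more explicit than, the paper's proof.
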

\begin{proof}
    This is a direct consequence of the above, since the $p$-Wasserstein distance also metrizes the weak topology on $\cP(\Omega)$~\citep[Theorem 6.9]{villani_optimal_2009}.
\end{proof}
\section{PPM Regularizer has Continuous Gradients} \label{apx:cont_grad}
We say that a function $f: \R^n \to \R^m$ is a $C^1$ function if all first derivatives of $f$ are continuous. 
We will begin with a more general statement which will immediately imply~\Cref{thm:main_smooth_gradient}. Fix $\mu \in \cP_c(\R^N)$ to be a measure, and let $h_\theta : \R^N \to \R^L$ be a mapping from $\R^N$ to the latent space $\R^L$. Suppose that the mapping $h_\theta$ is parametrized by $\theta \in \R^P$, and define $H: \R^P \times \R^N \to \R^L$ by $H(\theta, x) = h_\theta(x)$. We aim to show that $\fH: \R^P \to \cH_\Omega$, defined by
\begin{align}
    \fH(\theta) \coloneqq \Phi_\Omega(\PPM_q(h_\theta(\mu)))
\end{align}
is a smooth function. In particular, the pipeline for computing this feature map is
\begin{align}
    \cP_c(\R^N) \xrightarrow{(h_\theta)_*} \cP_c(\R^L) \xrightarrow{(-)^{\otimes (2k+2)}} \cP_c((\R^L)^{2k+2}) \xrightarrow{(\PH_k)_*} \cP(\Omega) \xrightarrow{\Phi} \cH ,
\end{align}
Let $F: \R^P \to \cP((\R^L)^{2k+2})$ be the map from the parameter space to the product measure in latent space. We assume that the resulting product measure has a $C^1$ density, which is true if $H$ is $C^1$, and $\mu$ has a $C^1$ density. Then, $F$ has the form
\begin{align} \label{eq:F_density}
    F(\theta) = f(\theta, x) dx
\end{align}
where $f: \R^p \times (\R^L)^{2k+2} \to \R$ is a $C^r$ function. 

\begin{theorem} \label{thm:expected_ppm_smooth}
    Let $\mu \in \cP(\R^N)$ be a probability measure with a $C^1$ density, suppose $H: \R^P \times \R^N \to \R^L$ is a $C^1$ function. Then $\fH: \R^P \to \cH_\Omega$ is a $C^1$ function (where derivatives are Fr\'echet derivatives).
\end{theorem}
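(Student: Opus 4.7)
The plan is to exploit the fact that the non-smoothness of $\PH_q$ has nothing to do with $\theta$, and can therefore be moved off the differentiated side. Using the $C^1$ density $f(\theta, \mathbf{y})$ from~\Cref{eq:F_density}, I would rewrite $\fH$ as the Bochner integral
$$\fH(\theta) = \int_{(\R^L)^{2q+2}} (\Phi_\Omega \circ \PH_q)(\mathbf{y}) \, f(\theta, \mathbf{y}) \, d\mathbf{y}$$
valued in $\cH_\Omega$. Because $\PH_q$ carries no $\theta$-dependence, its notoriously non-smooth behavior (jumps in the gradient at configurations where ties between pairwise distances occur) is entirely irrelevant to the differentiation; only the density $f$ is differentiated.

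First I would verify that $\Phi_\Omega \circ \PH_q : (\R^L)^{2q+2} \to \cH_\Omega$ is continuous and uniformly bounded. From~\Cref{eq:simple_ph}, the quantities $t_b$ and $t_d$ depend continuously on the configuration $\mathbf{y}$, and the only candidate discontinuity of $\PH_q$ is when the lifetime $\ell = t_d - t_b$ crosses zero and the feature collapses; in the quotient space $\Omega$ equipped with the quotient metric, this collapse is continuous. The feature map $\Phi_\Omega(b, \ell) = \ell \cdot k((b,\ell), \cdot)$ extends continuously through $*$ because the $\ell$-weight forces $\Phi_\Omega \to 0 = \Phi_\Omega(*)$ as $\ell \to 0$. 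Uniform boundedness $\|\Phi_\Omega \circ \PH_q\|_{\cH_\Omega} \leq C$ then follows by compactness of $\Omega_T$.

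Next I would apply a standard Leibniz rule in the Bochner setting. Fixing a compact neighborhood $K \subset \R^P$ around a target $\theta$, the compact support of $\mu$ together with the $C^1$ regularity of $H$ ensures that $f(\theta, \cdot)$ and $\nabla_\theta f(\theta, \cdot)$ are jointly continuous in $(\theta, \mathbf{y})$ and compactly supported in $\mathbf{y}$ uniformly over $\theta \in K$. Combined with the uniform bound on $\Phi_\Omega \circ \PH_q$, this gives the integrable dominating function required for Bochner-valued dominated convergence, yielding
$$D\fH(\theta) = \int_{(\R^L)^{2q+2}} (\Phi_\Omega \circ \PH_q)(\mathbf{y}) \, \nabla_\theta f(\theta, \mathbf{y}) \, d\mathbf{y},$$
and a second application gives continuity of $\theta \mapsto D\fH(\theta)$. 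Finally,~\Cref{thm:main_smooth_gradient} follows by applying~\Cref{thm:expected_ppm_smooth} with $H = D$ acting on $\mu$ and with $H = D \circ G$ acting on $\nu$ (both $C^1$ by the chain rule), and combining with the smoothness of the Hilbert norm $\|\cdot\|_{\cH_\Omega}$ away from the origin, which is precisely what gives rise to the nontriviality restriction in the statement.

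The hardest part will be making the continuity step clean: verifying that the interaction between the half-plane quotient defining $\Omega$ and the $\ell$-weight in $\Phi_\Omega$ genuinely heals the apparent discontinuities of $\PH_q$ at boundary configurations where a persistence feature vanishes. Once this is settled, the Bochner-Leibniz argument is routine, and the density assumption from~\Cref{eq:F_density} does all the remaining work.
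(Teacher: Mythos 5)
Your proposal is correct and follows essentially the same route as the paper: rewrite $\fH(\theta)$ as a Bochner integral of $\Phi_\Omega\circ\PH_q$ against the $C^1$ density $f(\theta,\cdot)$, then differentiate under the integral (the paper cites Hille's theorem where you invoke a Bochner--Leibniz/dominated-convergence argument). Your additional care about the continuity of $\Phi_\Omega\circ\PH_q$ through the collapse at $\ell=0$ and the uniform bound on compact $\Omega_T$ makes explicit steps the paper leaves implicit, but the underlying argument is the same.
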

\begin{proof}
    Expanding out the definition of $\fH$, we have
    \begin{align}
        \Phi_\Omega(\PPM_q(h_\theta(\mu))) &= \int_{\Omega} \Phi_\Omega (z) d(\PH_k)_*F(\theta)(z)\\
        & = \int_{(\R^L)^{2k+2}} \Phi_\Omega \circ \PH_k(x) dF(\theta)(x) \\
        & = \int_{(\R^L)^{2k+2}} \Phi_\Omega  \circ \PH_k(x) f(\theta, x) dx,
    \end{align}
    where we use the definition of the pushforward in the second line, and the definition of the density of $F$ in~\Cref{eq:F_density}.
    Now, this integral is a Bochner integral since it is valued in a Hilbert space, and we can still differentiate under the integral (by Hille's theorem; see~\citep[Paragraph 8.11.2]{dieudonne_foundations_1969}). Then, we have
    \begin{align}
        \frac{\partial}{\partial \theta_i} \Phi_\Omega(\PPM_q(h_\theta(\mu))) = \int_{(\R^L)^{2k+2}} \Phi \circ \PH_k(x) \frac{\partial f(\theta, x) }{\partial \theta_i}dx,
    \end{align}
    which is continuous since $f$ is $C^1$. 
\end{proof}

\begin{proof} [Proof of~\Cref{thm:main_smooth_gradient}.]
    By direct application of~\Cref{thm:expected_ppm_smooth}, both 
    \begin{align}
        \btheta \mapsto \Phi_\Omega\circ \PPM_q\circ d_{\boldsymbol{\theta}}(\mu) \quad \text{and} \quad (\btheta, \bomega) \mapsto \Phi_\Omega\circ \PPM_q\circ d_{\boldsymbol{\theta}}\circ g_{\boldsymbol{\omega}}(\nu)
    \end{align}
    are $C^1$ functions. 
    Then, since the norm is continuously differentiable away from the origin, we obtain the desired result. 
\end{proof}

\section{Shape Matching} \label{apx:shape_matching}

\subsection{Implementation Details for Shape Matching}\label{apx:CC}

\paragraph*{Shape Matching Experiment.} The task is to match the "shape" of two point clouds by optimizing a loss function of the form $\mathcal{L} + \mathcal{T}$ in ambient space. Throughout the experiment, we use gradient descent with momentum as the optimization algorithm. The value of the momentum parameter is 0.9 and the step size is 0.05. Two simple reference shapes in $\R^2$ are chosen for visualization purposes. They are a unit circle (left), and the union of two intersecting unit circles (right), shown in \Cref{fig:shape_matching}. A noisy point cloud with 512 points is first initialized with a normally distributed at the origin with a standard deviation of 0.3, we call training shape. The reference shapes are also sampled as a point cloud wiht 512 points. The reference shape is fixed and the training shape changes towards the reference, guided by some loss function. We test for the effectiveness of four loss function, they are Cramer, MMD, Cramer + PPM-Reg and MMD + PPM-Reg. The MMD metric using an RBF kernel (with width $\sigma = 0.1$). Throughout the experiment, the hyperparameter of PPM-Reg is fixed as $\lambda = 1$, $\lambda_0 = 1$, $\lambda_1 = 6000$, $\sigma = 0.1$ and $s = 2000$. For Cramer + PPM-Reg, the weight of the cramer loss is 1.6. For MMD + PPM-Reg, the weight of the MMD loss is 5. In~\Cref{shape_convergence}, we show the the shapes after 16000 training steps. Even at this stage, we observe that there are several ``leftover'' points. This is partially due to the choice of the underlying loss functions, and we observe that in (b), (d) and (h), the trained shapes using PPM-Reg largely capture the topological features of the reference shape.

 \begin{figure}[h!]
 	\centering
    \setlength{\tabcolsep}{0pt}
  	\begin{tabular}{cccc}
  		\mbox{\includegraphics[width=0.25\linewidth]{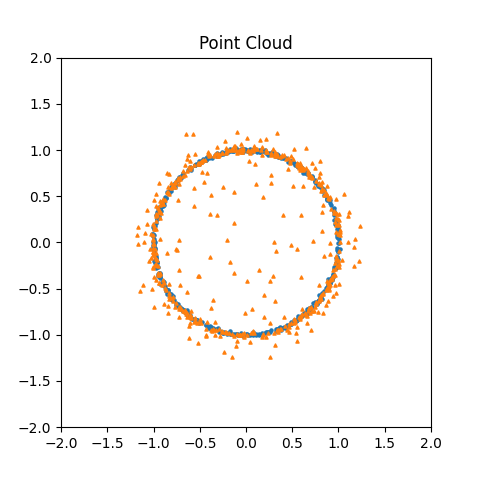}} & \mbox{\includegraphics[width=0.25\linewidth]{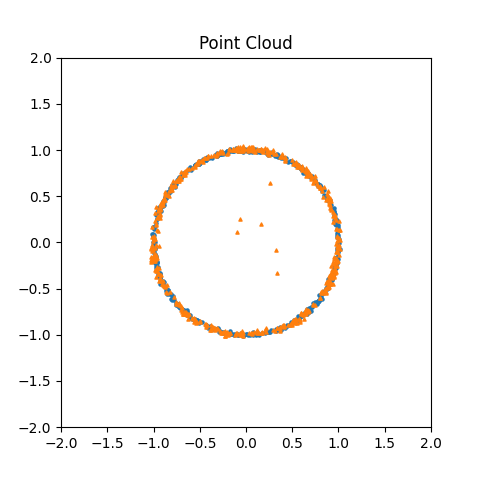}} &
          \mbox{\includegraphics[width=0.25\linewidth]{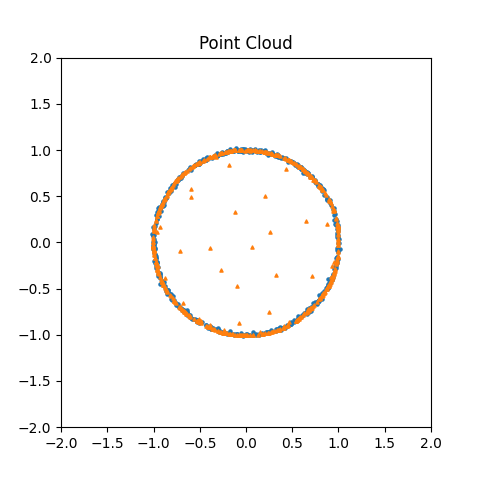}} & \mbox{\includegraphics[width=0.25\linewidth]{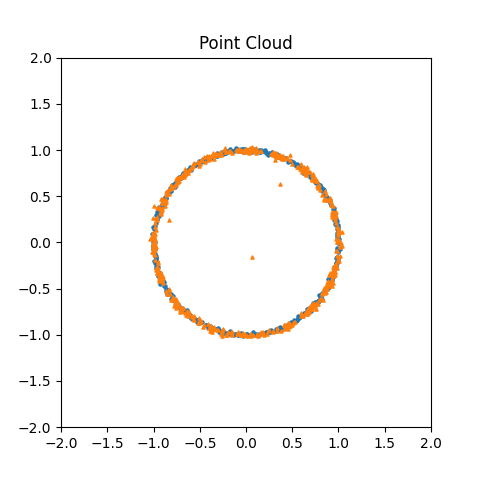}} \\
          (a) & (b) & (c) & (d) \\
  		\mbox{\includegraphics[width=0.25\linewidth]{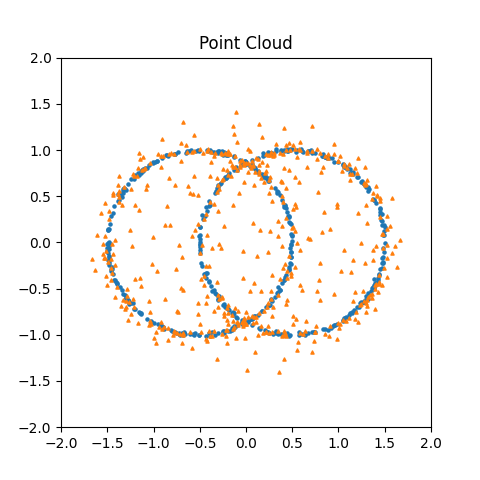}} & \mbox{\includegraphics[width=0.25\linewidth]{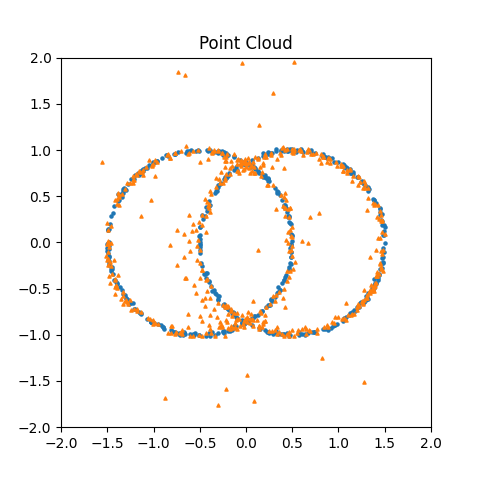}} &
          \mbox{\includegraphics[width=0.25\linewidth]{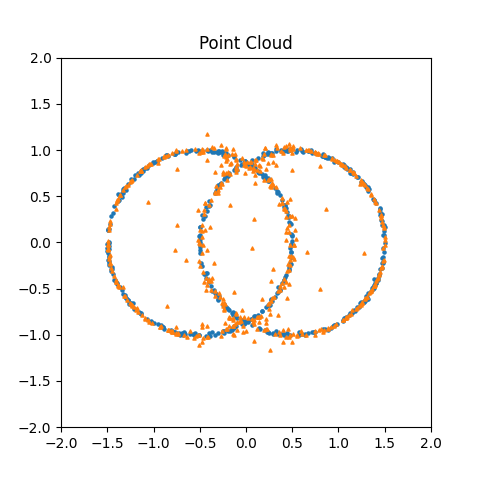}} & \mbox{\includegraphics[width=0.25\linewidth]{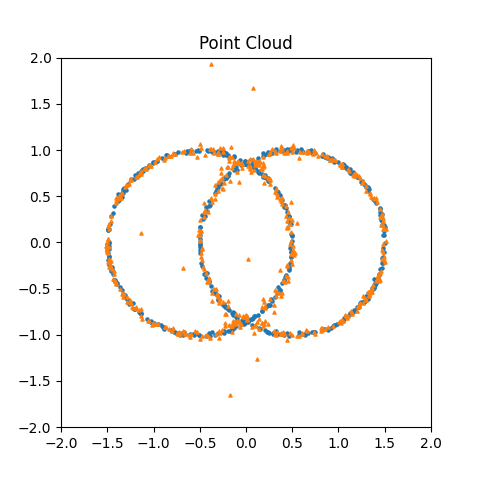}} \\
          (e)&(f) & (g) & (h)
  	\end{tabular}
  	\caption{Plots of the shape matching experiment at convergence after 16000 steps. (a) and (e) use only the Cramer loss. (b) and (f) use Cramer + PPM-Reg. (c) and (g) use only the MMD loss. (d) and (h) use MMD + PPM-Reg.}
  	\label{shape_convergence}
 \end{figure}

\paragraph*{Computational Comparison.}
Results are computed on Nvidia Geforce RTX 3060 with Intel Core i7-10700.

\subsection{Imperfect Convergence}\label{apx:IC}
\begin{figure}[tb]
\centering
  \begin{subfigure}{.48\textwidth}
  \centering
    \includegraphics[width=1\linewidth]{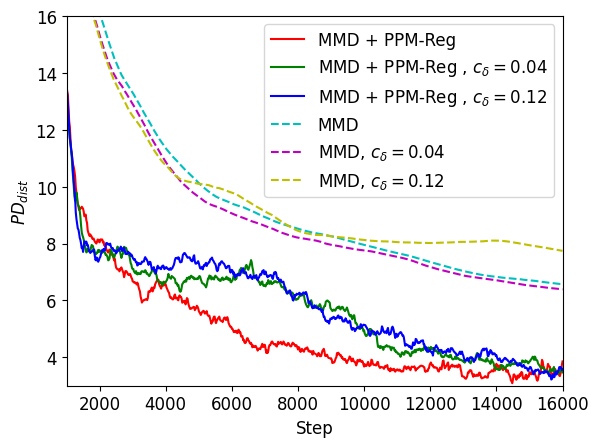}
    \caption{Unit circle}
  \end{subfigure}%
  \begin{subfigure}{.48\textwidth}
  \centering
    \includegraphics[width=1\linewidth]{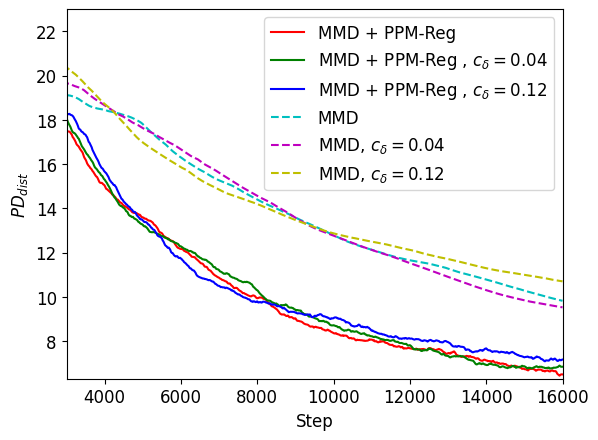}
    \caption{Union of two intersecting unit circles}
  \end{subfigure}
\caption{Illustrative example of the PPM-Reg in shape matching using MMD with increasing handicap contain. Plotting 2-Wasserstein distance upto 1-dimensional persistent homology between the reference shape and training shape ($PD_{dist}$) over optimization steps. The reference shape of (a) unit circle and (b) union of two intersecting unit circles. $c_\delta$ indicate the strength of the handicap (detail provided in~\Cref{apx:IC}). Showing that the ability to matching topological features as the handicap contains increase.}  
\label{weak_conv}
\end{figure}
This section demonstrates the efficacy of PPM-Reg when the primary loss function $\mathcal{L}$ is optimized imperfectly. As discussed in \Cref{sec:intro}, most GAN training algorithms converge to a local saddle point. In simpler latent-space matching tasks, $\mathcal{L}$ is often optimized alongside with other tasks. To summarize, converging to an imperfect $\mathcal{L}$ value is a common occurrence in machine learning. 

To restrict the solution of the shape matching problem, we impose a penalty term that prevents the centroid of the reference shapes and the training shape from being smaller than a user-defined value $c_\delta$. The penalty term $f_p$ is given by
\begin{align}
    f_p = \frac{\lambda_p}{\beta}ln(1 + e^{ \beta(c_\delta - \|c_{train}-c_{ref}\|_2) }),
\end{align}
where $c_{train}$ and $c_{ref}$ refers to the centroid of the training shape and reference shapes respectively. The term $\lambda_p$ is the penalty strength, $\beta$ is a tuning parameter. For the MMD case, $\mathcal{V} = \mathcal{L} + f_p$. For the MMD + PPM-Reg case, $\mathcal{V} = \mathcal{L} + \lambda \mathcal{T} + f_p$.

\paragraph*{Implementation Details.}
We largely follow the setup on~\Cref{Shape_Match} and~\Cref{apx:CC}, but make a few minor changes. We reduced the step size to 0.01 and performed 16,000 gradient steps. The hyperparameter of PPM-Reg is fixed as $\lambda = 1$, $\lambda_0 = 0.3$, $\lambda_1 = 6000$, $\sigma = 0.1$ and $s = 2000$. For the penalty function $f_p$, we set $\beta = 80$. The $\lambda_p$ value remains the same when we vary $c_\delta$ and compare against adding PPM-Reg. The $\lambda_p$ value is determined such that $\|c_{train}-c_{ref}\|_2$ converges to $c_\delta$ when $c_\delta = 0.04$.

\paragraph*{Evaluation Metrics.} We consider the case where $c_\delta \in \{0, 0.04, 0.12\}$. For reference shape normalized to $[-1, 1]$, having a $c_\delta= 0.12$ is very small.  We compare the main loss with the addition of PPM-Reg and track the 2-Wasserstein distance up-to the 1-dimensional persistence diagrams along gradient step. 

\paragraph*{Result.}\Cref{weak_conv} shows the change in PD distance as $c_\delta$ increases with the circle~\Cref{weak_conv}(a) and union of two circles~\Cref{weak_conv}(b). Compared with only using MMD as main loss, adding PPM-Reg consistently converges to a smaller PD distance regardless of the $c_\delta$ value. 
In contrast, when only using MMD, the PD distance increases as $c_\delta$ value increases. \Cref{weak_conv} illustrates the benefit of \emph{explicitly} comparing the difference between topological features with PPM-Reg compared with \emph{implicitly} considering topological features with only MMD when the optimization problem may not converge to near zero.  

\section{Unconditional image generation}\label{apx:UIG}
\subsection{Implementation Details for Unconditional image generation}
This section fills in the details of the unconditional image generation experiment in~\Cref{I_Gen}.
\begin{table}[th]
   \centering
   \setlength{\tabcolsep}{4pt}
   \small
   \begin{tabular}{cccccccccc}
     \toprule
     \multicolumn{1}{c}{}&\multicolumn{3}{c}{$\sigma=0.05$}& \multicolumn{3}{c}{$\sigma=0.5$}& \multicolumn{3}{c}{$\sigma=1.0$}              \\
     \cmidrule(r){2-10}
      $\lambda$ & CMMD & FD\textsubscript{Dinov2} & WD\textsubscript{latent} & CMMD & FD\textsubscript{Dinov2} & WD\textsubscript{latent} & CMMD & FD\textsubscript{Dinov2} & WD\textsubscript{latent}  \\
     \midrule
     $1.0$ &  0.74 & 945.27& 0.6330 & 0.68 & 846.33 &0.6228 &  0.56  & 780.68 & 0.6080  \\
     $5.0$ &  0.73& 928.60 & 0.6310 &0.72 & 884.40 &0.6282 & 0.74 & 894.77 & 0.6308 \\
     $10.0$  &   0.74&	880.68 &	0.6332    & 0.77	& 922.58 &0.6379& 0.71	& 923.50 &0.6274
\\
     \bottomrule
   \end{tabular}
   \caption{Ablation study on AnimeFace dataset.} 
   \label{CelebA1}
\end{table}
\begin{table}[th]
   \centering
   \setlength{\tabcolsep}{4pt}
   \small
   \begin{tabular}{cccccccccc}
     \toprule
     \multicolumn{1}{c}{}&\multicolumn{3}{c}{$\sigma=0.05$}& \multicolumn{3}{c}{$\sigma=0.5$}& \multicolumn{3}{c}{$\sigma=1.0$}              \\
     \cmidrule(r){2-10}
      $\lambda$ & CMMD & FD\textsubscript{Dinov2} & WD\textsubscript{latent} & CMMD & FD\textsubscript{Dinov2} & WD\textsubscript{latent} & CMMD & FD\textsubscript{Dinov2} & WD\textsubscript{latent}  \\
     \midrule
     $1.0$ &  0.87& 737.09 &0.6945  &0.65&704.74 &0.6744 & 0.81&	745.39&0.6886\\
     $5.0$ & 0.72& 733.66 &0.6815& 0.58   & 700.73 & 0.6666 &0.68	& 695.36 &	0.6768 \\
     $10.0$& 0.61& 690.01&0.6691   & 0.69& 683.35 &0.6781 & 0.71&719.39 &0.6795
 \\
     \bottomrule
   \end{tabular}
   \caption{Ablation study on CelebA dataset.} 
   \label{AnimeFace1}
\end{table}
\paragraph*{Implementation Details.} The network architectures used in the unconditional image generation in~\Cref{I_Gen} are shown in~\Cref{arch2}. As illustrated in~\Cref{arch2}, $g_\bomega$ is a ResNet based CNN that takes 128-dimensional noise vector as input. $d_\btheta$ is a CNN that outputs a 128-dimensional latent vector.

We compare the basic Cramer~\citep{bellemare2017cramer} value function $\cV = \cL$, with the use of PPM-Reg $\cV = \cL + \cT$. For the addition of PPM-Reg case, we fix $\lambda_0 = 0.001$, $\lambda_1 = 0.6$, and $s = 1024$. $\lambda = \{1.0, 5.0, 10.0\}$ and $\sigma = \{0.05, 0.1, 0.5\}$ are the tuning parameter. In both cases, the standard Adam optimizer with learning rate $1\times10^{-4}$ is used to train the network. For $g_{\boldsymbol{\omega}}$, $\beta_1 = 0.0$ and $\beta_2 = 99$. For $d_{\boldsymbol{\theta}}$, $\beta_1 = 0.5$ and $\beta_2 = 0.99$. The batch size is 192. For the CelebA dataset, the GAN training run for 5440 epochs. For AnimeFace data set, the GAN training run for 7000 epochs. 

During training, we compute CMMD for every 160 epochs. We report the CMMD~\citep{jayasumana2024rethinking} and WD\textsubscript{latent} and FD\textsubscript{Dinov2}~\citep{stein2024exposing} with the smallest CMMD value across training epochs. Those quantitative metrics are computed by sampling 10K images from the data set and generating 10k images from the network.

\paragraph*{Ablation Study.} 
There are two primary parameters in our topological regularizer. The parameter $\lambda$ controls the strength of the regularizer, while $\sigma$ controls the width of the RBF kernel in defining the MMD for PPMs. We provide an ablation study to show how the evaluation metrics change as we vary these parameters in~\Cref{AnimeFace1} for AnimeFace and~\Cref{CelebA1} for CelebA.

\subsection{Further Unconditional Image Generation Experiment} \label{apx:further_uig}
This section introduces supplementary unconditional image generation experiments with an increase in image resolution as well as additional datasets in conjunction with appropriate network architectures.

\begin{table}[h!]
   \centering
   \setlength{\tabcolsep}{3pt}
   \small
   \begin{tabular}{ccccccc}
     \toprule
     \multicolumn{1}{c}{}&\multicolumn{3}{c}{CelebA}&\multicolumn{3}{c}{LSUN Kitchen}                     \\
     \cmidrule(r){2-7}
        & CMMD & FD\textsubscript{Dinov2} & WD\textsubscript{latent}& CMMD & FD\textsubscript{Dinov2}& WD\textsubscript{latent} \\
     \midrule
     Cramer~\citep{bellemare2017cramer}     & 0.52  & 902.09 & 0.7335  & 1.56  & 1592.06 & 0.7690\\
     Cramer + PPM-Reg     &0.46  &826.04 &0.7296 & 1.31  & 1381.22 & 0.7502 \\
     \bottomrule
   \end{tabular}
      \caption{Quantitative evaluation on $64\times 64$ image generation, values are reported at the epoch with the smallest CMMD.}
   \label{appx:Quantitative}
\end{table}  
 \begin{figure}[h!]
 	\centering
    \setlength{\tabcolsep}{0pt}
  	\begin{tabular}{ccc}
  		\mbox{\includegraphics[width=0.32\linewidth]{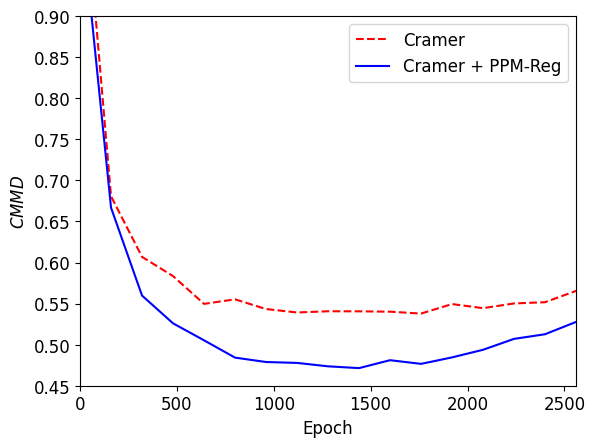}} & \mbox{\includegraphics[width=0.32\linewidth]{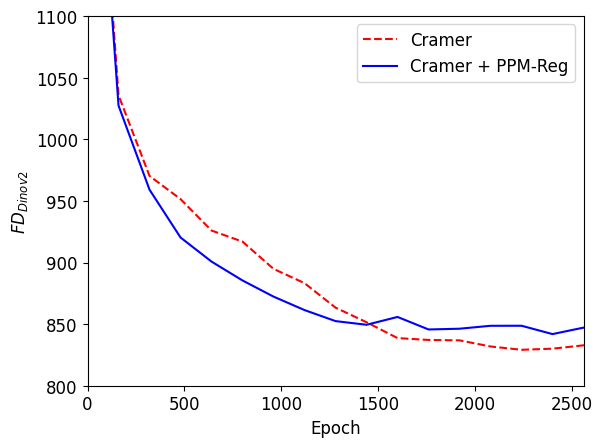}} &
          \mbox{\includegraphics[width=0.32\linewidth]{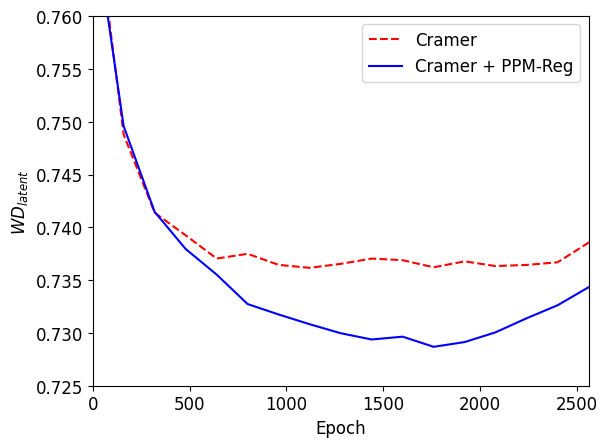}} \\
          (a) & (b) &(c) \\
          \mbox{\includegraphics[width=0.32\linewidth]{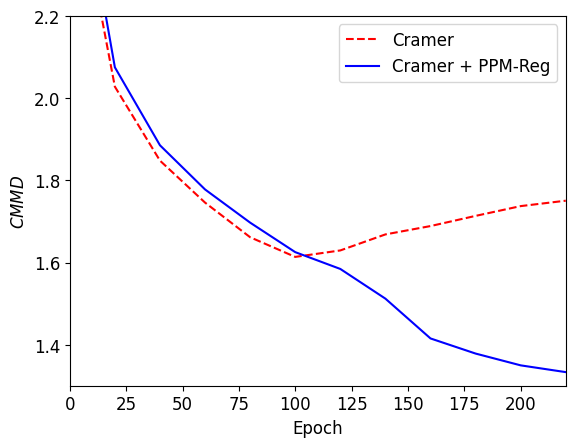}} & \mbox{\includegraphics[width=0.32\linewidth]{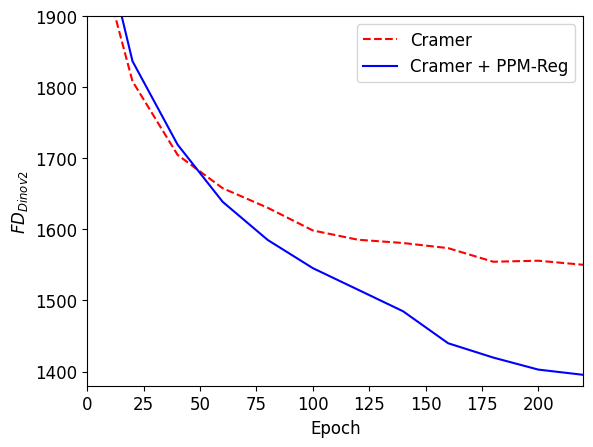}} &
          \mbox{\includegraphics[width=0.32\linewidth]{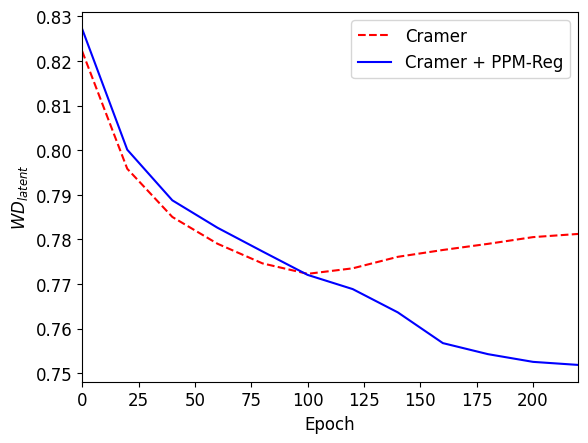}}\\
          (d)&(e)&(f)
  	\end{tabular}
  	\caption{CMMD~(a,e), FD\textsubscript{Dinov2}~(b,f) and WD\textsubscript{latent}~(c,g) versus training epochs for the CelebA (a-c) and LSUN Kitchen (d-f) dataset for the $64 \times 64$ image generation. 10K samples are randomly generated to compute the distances, and moving averages with a window of 5 are used to smooth the values. For CelebA, distances are recorded every 160 epochs. For LSUN Kitchen, distances are recorded every 20 epochs.}
  	\label{showcase2x}
 \end{figure}

\paragraph*{Implementation Details.} Similar to~\Cref{I_Gen}, $g_\bomega$ is a ResNet based CNN that takes 128-dimensional noise vector as input. $d_\btheta$ is a CNN that outputs a 128-dimensional latent vector. The major difference is instead of just generating $32 \times 32$ images as in~\Cref{I_Gen}, we
test our PPM-Reg with higher resolution. Specifically, we consider the $64 \times 64$ image generation task. To accommodate the increase in modeling complexity, we introduce a new network architecture shown in~\Cref{arch4}. 

To avoid confusion, we will write out our implementation details. We are comparing Cramer~\citep{bellemare2017cramer} and the addition of PPM-Reg. For the addition of PPM-Reg case, we only fix $\lambda_0 = 0.001$ and $s=1024$. The value of $\lambda_1$ changes as the training epoch runs. Specifically, we adapt cosine annealing~\citep{loshchilov2016sgdr} on $\lambda_1$, the $\lambda_1$ value at epoch $t$ term $\lambda_1^t$ is given by
\begin{align}\label{eq:cos_ann}
    \lambda_1^t = \begin{cases}
     \lambda_1^{min} + \frac{1}{2}(\lambda_1^{max} - \lambda_1^{min})(1+\cos(\frac{t}{t_{end}}\pi)), & \text{if } t\leq t_{end} \\
     \lambda_1^{min}, & \text{if } t> t_{end},
\end{cases}
\end{align}
where $\lambda_1^{min}$, $\lambda_1^{max}$ and $t_{end}$ are user-defined variable. $\lambda_1^{min}$ and $\lambda_1^{max}$ are the range of the $\lambda_1$, $t_{end}$ is the ending epoch for the cosine annealing. We fix $\lambda_1^{min} = 0.1$, $\sigma = 0.5$ for both dataset. For CelebA, $\lambda = 1$, $\lambda_1^{max} = 1$ and $t_{end} = 1920$. For LSUN Kitchen, $\lambda = 10$, $\lambda_1^{max} = 0.8$, and $t_{end} = 260$. The standard Adam optimizer with learning rate $1\times10^{-4}$ is used to train the network. For $g_{\boldsymbol{\omega}}$, $\beta_1 = 0.0$ and $\beta_2 = 99$. For $d_{\boldsymbol{\theta}}$, $\beta_1 = 0.5$ and $\beta_2 = 0.99$. The batch size is 192. For the CelebA dataset, the GAN training is run for 2560 epochs, while for the LSUN Kitchen data set, the GAN training is run for 300 epochs.

\paragraph*{Dataset and Evaluation Metrics.} We add a new dataset LSUN Kitchen~\citep{yu2015lsun} and also use CelebA~\citep{liu2015deep} at a higher resolution. Images are centered and resized to $64 \times 64$. During training, we compute CMMD for every 160 epochs for CelebA and 20 epochs for LSUN Kitchen. We report the CMMD~\citep{jayasumana2024rethinking} and WD\textsubscript{latent} and FD\textsubscript{Dinov2}~\citep{stein2024exposing} with the smallest CMMD value across training epochs. 
For justification for using the chosen quantitative evaluation metrics, readers can refer to~\Cref{I_Gen}. Those quantitative metrics are computed by sampling 10K images from the data set and generating 10k images from the network.

\paragraph*{Result.}\Cref{showcase2x} tracks the three quantitative metrics during training and \Cref{appx:Quantitative} reports the three metrics with the smallest CMMD. 
While the FD\textsubscript{Dinov2} metric has comparable values in~\Cref{showcase2x}(b), at the point of the smallest CMMD, Cramer + PPM-Reg has a smaller FD\textsubscript{Dinov2} for CelebA, shown in~\Cref{appx:Quantitative}.
Our quantitative results in~\Cref{appx:Quantitative} reinforce the fact that using PPM-Reg improves the generative ability of GANs. Compared to CelebA dataset, using PPM-Reg in the LSUN Kitchen dataset results in a more significant improvement. We conjecture that since LSUN Kitchen is a much larger dataset (2,212,277 training samples) that contains diverse images (i.e.~different color tones, layouts), its underlying lower dimensional submanifold has more complex topological features compared with CelebA. The significant improvement gives evidence that PPM-Reg is useful in discovering more complex topological structures in latent space.   

\section{Semi-Supervised Learning}\label{apx:SSL}
\subsection{Implementation Details for Semi-Supervised Learning}
\begin{figure}[ht]
	\centering
    \setlength{\tabcolsep}{30pt}
	\begin{tabular}{cc}
		\mbox{\includegraphics[width=0.26\linewidth]{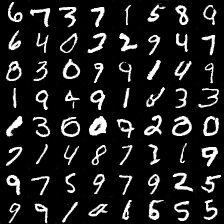}} & \mbox{\includegraphics[width=0.26\linewidth]{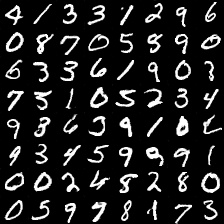}} \\
        (a) & (b)
	\end{tabular}
	\caption{Generated images from SSL experiment from trained $g_\bomega$ for MNIST using (a) Cramer Distance and (b) PPM-Reg.}
	\label{GenI2}
\end{figure}

This section fills in the details of the semi-supervised learning experiment in~\Cref{ssec:SSL}.

\paragraph*{Network Architecture and Implementation Details.} The network architectures used in the semi-supervised learning experiment in~\Cref{ssec:SSL} are shown in~\Cref{arch1}. The input noise vector of $g_{\boldsymbol{\omega}}$ has dimension of 64. We set the dimension of the output latent vector of $d_{\boldsymbol{\theta}}$ as 64. 

Specifically, $g_{\boldsymbol{\omega}}$ and $d_{\boldsymbol{\theta}}$ are trained with the GAN framework for 4,000 epochs using the standard Adam optimizer with learning rate $1\times10^{-4}$. For $g_{\boldsymbol{\omega}}$, we set $\beta_1 = 0.0$ and $\beta_2 = 99$; for $d_{\boldsymbol{\theta}}$, we set $\beta_1 = 0.5$ and $\beta_2 = 0.99$. The batch size is 192. For Cramer + PPM-Reg, we fix $\lambda_0 = 1$, $\lambda_1 = 90$ and $s = 1024$. $\lambda = \{0.025, 0.05, 0.1\}$ and $\sigma = \{0.05, 0.1, 0.5\}$ are the tuning parameter. After the GAN framework completes the training of $g_{\boldsymbol{\omega}}$ and $d_{\boldsymbol{\theta}}$, the weights of $d_{\boldsymbol{\theta}}$ are frozen. Using $d_{\boldsymbol{\theta}}$ as feature extraction, the output of $d_{\boldsymbol{\theta}}$ is fed into the classifier. The classifier $c_{\gamma}$ is train with the standard Adam optimizer with learning rate $1\times10^{-4}$ where $\beta_1 = 0.1$ and $\beta_2 = 0.99$ for 1000 epochs.

"Baseline" connects $d_\btheta$ and $c_\gamma$ and trains as a classifier without first pertaining $d_\btheta$ with GANs. The standard Adam optimizer with learning rate $1\times10^{-4}$ where $\beta_1 = 0.1$ and $\beta_2 = 0.99$ are use to train the "Baseline" network for 4,000 epochs.  

\begin{table}[th]
   \centering
   \setlength{\tabcolsep}{4pt}
   \small
   \begin{tabular}{ccccccc}
     \toprule
     \multicolumn{1}{c}{}&\multicolumn{3}{c}{Fashion-MNIST (200 labels)} &    \multicolumn{3}{c}{Fashion-MNIST (400 labels)}          \\
     \cmidrule(r){2-7}
      $\lambda$ & $\sigma=0.05$ & $\sigma=0.1$ & $\sigma=0.5$ & $\sigma=0.05$ & $\sigma=0.1$ & $\sigma=0.5$  \\
     \midrule
     $0.1$ & $75.39\pm1.12$  & $74.96\pm0.96$  & $75.24\pm1.20$ & $75.39\pm1.12$  & $74.96\pm0.96$  & $75.24\pm1.20$ \\
     $0.05$ & $76.35\pm1.09$ & $76.25\pm1.30$&$76.81\pm0.88$ & $76.35\pm1.09$ & $76.25\pm1.30$&$76.81\pm0.88$\\
     $0.025$     &  $76.42\pm0.85$ & $76.77\pm1.29$ & $76.84\pm1.23$ &  $76.42\pm0.85$ & $76.77\pm1.29$ & $76.84\pm1.23$ \\
     \bottomrule
   \end{tabular}
   \caption{Ablation study on Fashion-MNIST dataset with 200/400 labels. Test-set classification accuracy ($\%$) is shown averaged over 10 runs.} 
   \label{D1_200}
\end{table}

\begin{table}[th]
   \centering
   \setlength{\tabcolsep}{4pt}
   \small
   \begin{tabular}{ccccccc}
     \toprule
     \multicolumn{1}{c}{}&\multicolumn{3}{c}{Kuzushiji-MNIST (200 labels)} &\multicolumn{3}{c}{Kuzushiji-MNIST (400 labels)}                  \\
     \cmidrule(r){2-7}
      $\lambda$ & $\sigma=0.05$ & $\sigma=0.1$ & $\sigma=0.5$ & $\sigma=0.05$ & $\sigma=0.1$ & $\sigma=0.5$ \\
     \midrule
     $0.1$ & $73.70\pm2.58$  & $70.84\pm2.56$  & $70.16\pm1.96$ & $78.35\pm1.81$  & $76.18\pm1.87$  & $76.67\pm1.31$ \\
     $0.05$ & $74.35\pm2.45$ & $73.65\pm1.70$&$73.97\pm1.24$& $78.99\pm1.41$ & $79.92\pm1.88$&$78.77\pm1.48$\\
     $0.025$&$74.47\pm1.65$ & $75.78\pm1.99$ & $75.41\pm2.83$ &$79.40\pm1.65$ & $79.33\pm1.69$ & $80.04\pm1.37$ \\
     \bottomrule
   \end{tabular}
   \caption{Ablation study on Kuzushiji-MNIST with 200/400 labels. Test-set classification accuracy ($\%$) is shown averaged over 10 runs.} 
   \label{D2_200}
\end{table}
\begin{table}[th]
   
   \centering
   \setlength{\tabcolsep}{4pt}
   \small
   \begin{tabular}{ccccccc}
     \toprule
     \multicolumn{1}{c}{}&\multicolumn{3}{c}{MNIST (200 labels)}&\multicolumn{3}{c}{MNIST (400 labels)}              \\
     \cmidrule(r){2-7}
      $\lambda$ & $\sigma=0.05$ & $\sigma=0.1$ & $\sigma=0.5$ & $\sigma=0.05$ & $\sigma=0.1$ & $\sigma=0.5$  \\
     \midrule
     $0.1$ & $96.34\pm0.29$  & $96.32\pm0.20$  & $96.18\pm0.19$ & $97.19\pm0.28$  & $97.06\pm0.28$  & $97.07\pm0.26$ \\
     $0.05$ & $96.61\pm0.37$ & $96.62\pm0.39$&$96.22\pm0.59$ &$97.29\pm0.20$ & $97.33\pm0.21$&$97.16\pm0.25$\\
     $0.025$&$96.13\pm0.42$ & $95.97\pm0.37$ & $95.91\pm0.55$ &$97.04\pm0.27$ & $96.92\pm0.20$ & $97.2\pm0.25$ \\
     \bottomrule
   \end{tabular}
   \caption{Ablation study on MNIST with 200/400 labels. Test-set classification accuracy ($\%$) is shown averaged over 10 runs.} 
   \label{D3_200}
\end{table}

\paragraph*{Ablation Study.} We show how the classification accuracy varies with respect to the topological regularization strength $\lambda$ and the RBF width $\sigma$ in~\Cref{D1_200} for Fashion-MNIST, \Cref{D2_200} for Kuzushiji-MNIST, and~\Cref{D3_200} for MNIST.

\subsection{Further Semi-Supervised Learning Experiment} \label{apx:further_ssl}
This section introduces supplementary semi-supervised learning experiments with additional datasets in conjunction with appropriate network architectures.
 
\paragraph*{Network Architecture and Implementation Details.} Similar to the setup in~\Cref{ssec:SSL}, $g_\bomega$ is a deconvolutional network with 64 dimension noise vector as input. $d_\btheta$ is a CNN with a 64 dimension latent vector as output. The additional SVHM dataset is more intricate than the MNIST variants evaluated in~\Cref{ssec:SSL}, as it is a color image dataset with higher resolution. To tackle the increase in input dimension and the complexity of modeling the dataset, we introduce a new network architecture shown in~\Cref{arch3}. 

The training setup largely follows the previous experiment; the major difference is the number of training epochs. 
Specifically, $g_{\boldsymbol{\omega}}$ and $d_{\boldsymbol{\theta}}$ are trained with the GAN framework for 1,200 epochs. The standard Adam optimizer with learning rate $1\times10^{-4}$ is used to train the network. For $g_{\boldsymbol{\omega}}$, $\beta_1 = 0.0$ and $\beta_2 = 99$. For $d_{\boldsymbol{\theta}}$, $\beta_1 = 0.5$ and $\beta_2 = 0.99$. The batch size is 192. For Cramer + PPM-Reg, we fix $\lambda_0 = 1$, $\lambda_1 = 90$ and $s = 1024$. $\lambda = \{0.025, 0.05, 0.1\}$ and $\sigma = \{0.05, 0.1, 0.5\}$ are the tuning parameter. After the GAN framework completes the training of $g_{\boldsymbol{\omega}}$ and $d_{\boldsymbol{\theta}}$, the weights of $d_{\boldsymbol{\theta}}$ are frozen. Using $d_{\boldsymbol{\theta}}$ as feature extraction, the output of $d_{\boldsymbol{\theta}}$ is fed into the classifier. The classifier $c_{\gamma}$ is trained with the standard Adam optimizer with learning rate $1\times10^{-4}$ where $\beta_1 = 0.1$ and $\beta_2 = 0.99$ for 1000 epochs.
The standard Adam optimizer with learning rate $1\times10^{-4}$ where $\beta_1 = 0.1$ and $\beta_2 = 0.99$ are use to train the "Baseline" network for 4,000 epochs. 

\paragraph*{Dataset and Evaluation Metrics.} We compare the SSL performance with the dataset SVHN. In this experiment, 400 and 600 labels are randomly sampled from the data set. Because of the random nature involved in selecting a few labels, we conducted the experiments ten times and provided the statistics of the highest test-set accuracy achieved.

\begin{table}[th]
   \centering
   \setlength{\tabcolsep}{3pt}
   \small
   \begin{tabular}{ccc}
     \toprule
     \multicolumn{1}{c}{}&\multicolumn{2}{c}{SVHN}              \\
     \cmidrule(r){2-3}
     Number of labels & 400 & 600\\
     \midrule
     Baseline & $44.68\pm2.44
$ & $53.68\pm4.15$ \\
     Cramer& $38.12\pm1.75$ & $43.23\pm1.14$ \\
     Cramer + PPM-Reg & $\textbf{57.20}\pm\textbf{1.51}$&$\textbf{61.39}\pm\textbf{0.66}$ \\
     \bottomrule
   \end{tabular}
      \caption{Test-set classification accuracy ($\%$) on SVHN with 400 and 600 labeled examples. The average and the error bar are computed over 10 runs.}
   \label{CA_app}
\end{table}

\begin{table}[th]
   \centering
   \setlength{\tabcolsep}{4pt}
   \small
   \begin{tabular}{ccccccc}
     \toprule
     \multicolumn{1}{c}{}&\multicolumn{3}{c}{SVHN (400 labels)}  &\multicolumn{3}{c}{SVHN (600 labels)}              \\
     \cmidrule(r){2-7}
      $\lambda$ & $\sigma=0.05$ & $\sigma=0.1$ & $\sigma=0.5$& $\sigma=0.05$ & $\sigma=0.1$ & $\sigma=0.5$  \\
     \midrule
     $0.1$ & $55.66\pm1.78$  & $57.20\pm1.51$  & $54.84\pm1.21$ & $59.48\pm1.26$  & $61.39\pm0.66$&$59.13\pm0.95$ \\
     $0.05$ & $47.56\pm1.67$ & $56.24\pm1.25$&$52.41\pm1.49$& $50.73\pm1.06$ & $60.08\pm0.84$&$60.91\pm1.19$\\
     $0.025$&$55.96\pm1.03$ & $56.96\pm1.92$ & $55.35\pm1.06$ &$59.13\pm0.95$ & $56.45\pm1.41$ &$58.97\pm1.00$ \\
     \bottomrule
   \end{tabular}
   \caption{Ablation study on SVHN with 400/600 labels. Test-set classification accuracy ($\%$) is shown averaged over 10 runs.} 
   \label{D4_400}
\end{table}

\paragraph*{Result.} \Cref{CA_app} shows the test classification accuracy.  An ablation study is provided in \Cref{D4_400}. SVHN contains 72,657 training samples and 400 and 600 labels constitute only 0.55\% and 0.82\% of the original datasets, respectively. The results follow the same characteristic in~\Cref{ssec:SSL}, only using Cramer does not improve the classification accuracy in SSL. In contrast, the use of PPM-Reg results in a notable improvement in classification accuracy. Specifically, using PPM-Reg with 400 labels yields a 12.52\% increase in accuracy compared to the Baseline. The consistent outcome with more complex datasets and network architecture reinforces our claim that PPM-Reg can help learn a more informative latent encoding thereby improving SSL performance.

\clearpage
\section{Network Architecture}
\begin{figure}[ht]
	\centering
    \setlength{\tabcolsep}{20pt}
	\begin{tabular}{cc}
		\mbox{\includegraphics[width=0.42\linewidth]{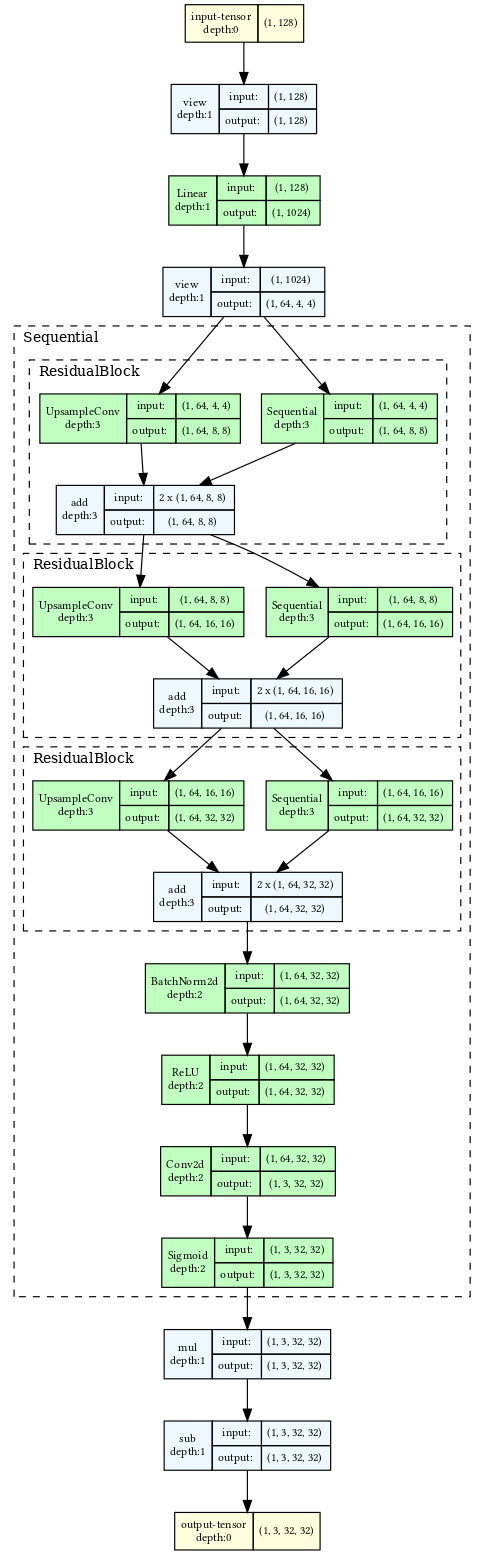}} & \mbox{\includegraphics[width=0.25\linewidth]{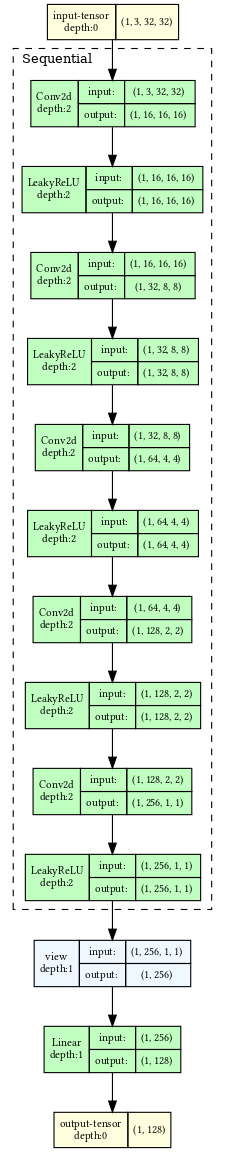}}\\
        (a) & (b) 
	\end{tabular}
	\caption{Network architecture of generator~(a), discriminator~(b) for $32\times32$ unconditional image generation.}
	\label{arch2}
\end{figure}

\begin{figure}[ht]
	\centering
    \setlength{\tabcolsep}{20pt}
	\begin{tabular}{cc}
		\mbox{\includegraphics[width=0.44\linewidth]{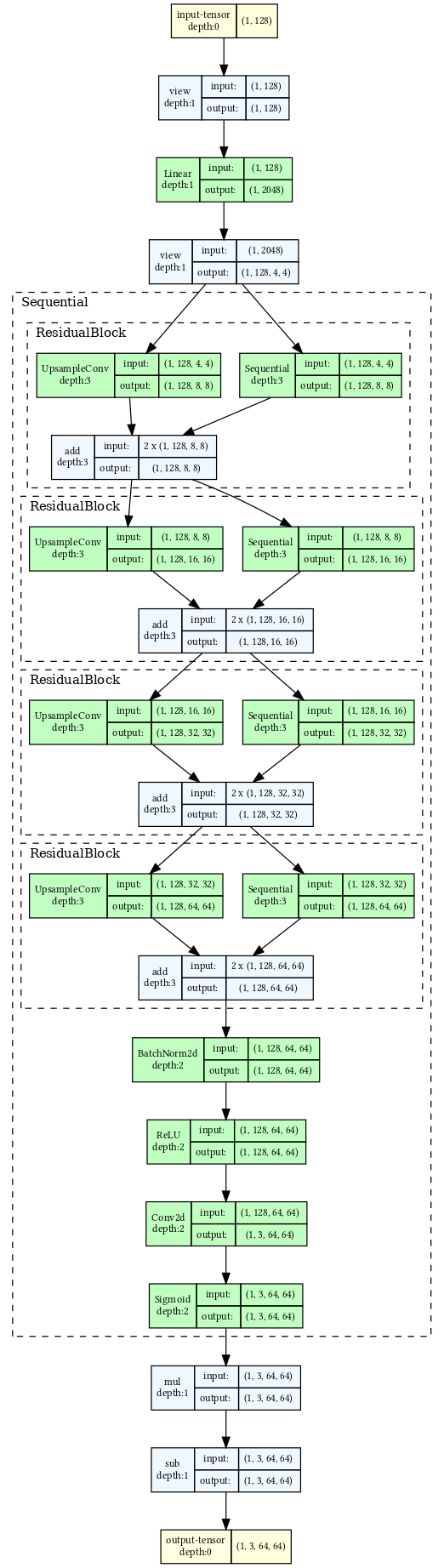}} & \mbox{\includegraphics[width=0.28\linewidth]{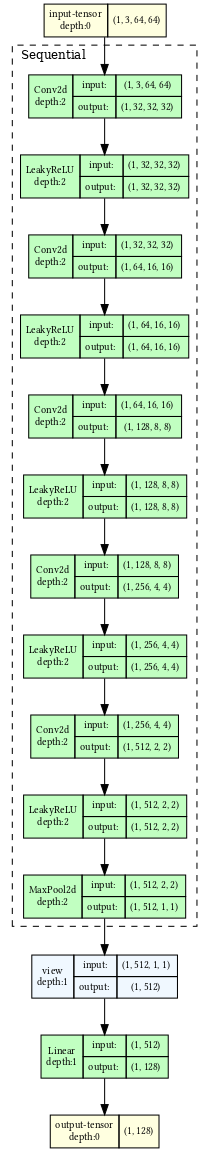}}\\
        (a) & (b) 
	\end{tabular}
	\caption{Network architecture of generator~(a), discriminator~(b) for $64\times64$ unconditional image generation.}
	\label{arch4}
\end{figure}

\begin{figure}[ht]
	\centering
    \setlength{\tabcolsep}{12pt}
	\begin{tabular}{ccc}
		\mbox{\includegraphics[width=0.19\linewidth]{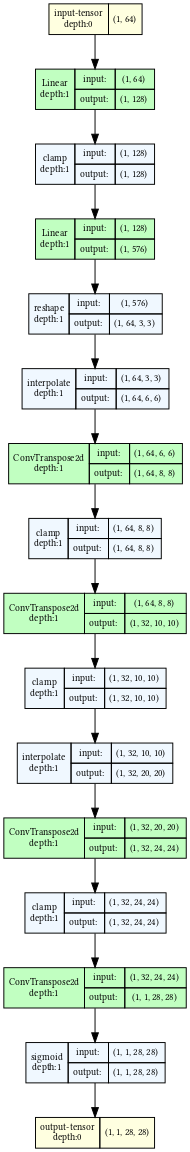}} & \mbox{\includegraphics[width=0.19\linewidth]{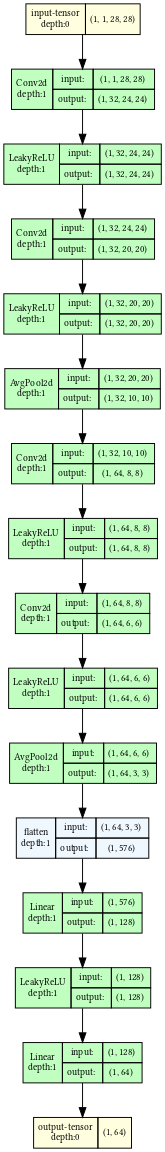}} &
        \mbox{\includegraphics[width=0.19\linewidth]{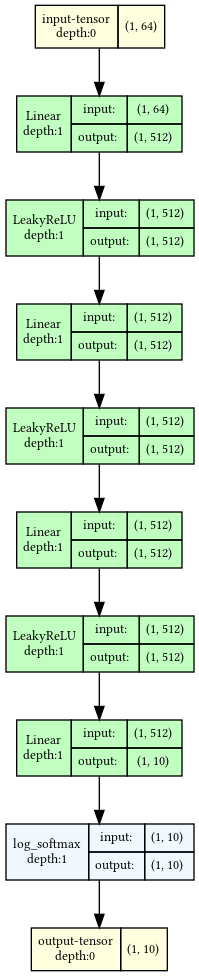}} \\
        (a) & (b) &(c)
	\end{tabular}
	\caption{Network architecture of generator~(a), discriminator~(b) and classifiacter~(c) for semi-supervised learning for MNIST variants.}
	\label{arch1}
\end{figure}

\begin{figure}[ht]
	\centering
    \setlength{\tabcolsep}{12pt}
	\begin{tabular}{ccc}
		\mbox{\includegraphics[width=0.19\linewidth]{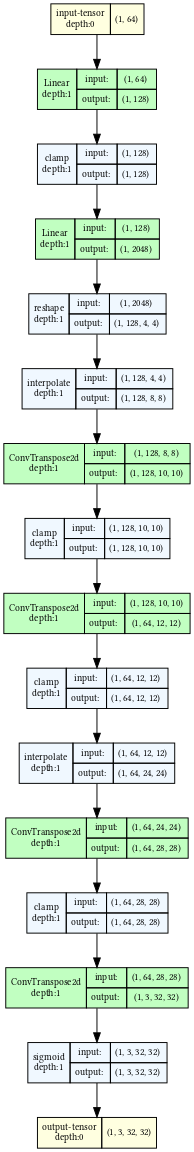}} & \mbox{\includegraphics[width=0.19\linewidth]{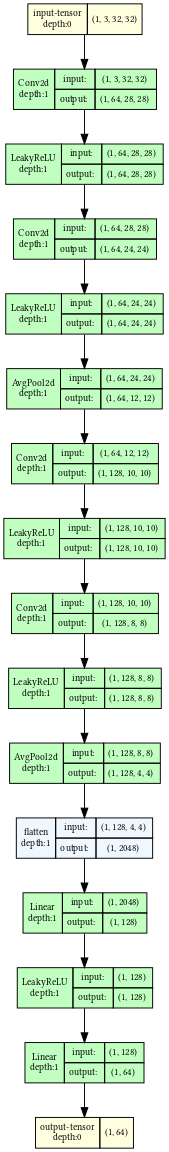}} &
        \mbox{\includegraphics[width=0.19\linewidth]{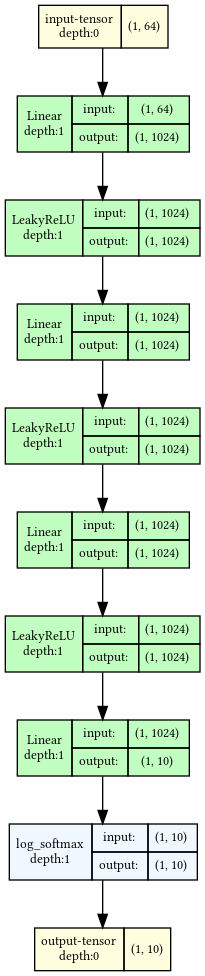}} \\
        (a) & (b) &(c)
	\end{tabular}
	\caption{Network architecture of generator~(a), discriminator~(b) and classifiacter~(c) for semi-supervised learning for SVHN dataset.}
	\label{arch3}
\end{figure}

\end{document}